\newcommand{\popa}{{\mathscr{A}}}
\newcommand{\popx}{{\mathscr{X}}}
\newcommand{\popd}{{\mathscr{D}}}
\newcommand{\popo}{{\mathscr{O}}}
\newcommand{\popp}{{\mathscr{P}}}
\newcommand{\rot}{{\mathscr{R}}}
\newcommand{\popl}{{\mathscr{L}}}
\newcommand{\R}{\mathbb{R}}
\newcommand\B{\rule[-1.4ex]{0pt}{0pt}}
\newcommand{\bP}{\ensuremath{\mathbb{P}}}
\newcommand{\E}{\mathbb{E}}
\newtheorem{theorem}{Theorem}[section]
\newtheorem{definition}{Definition}
\newtheorem{corollary}{Corollary}[section]
\newtheorem{lemma}{Lemma}[section]
\newtheorem{prop}{Proposition}[section]
\title{Co-clustering for directed graphs: the Stochastic co-Blockmodel and spectral algorithm Di-Sim}
\author[1]{Karl Rohe \thanks{karlrohe at stat dot wisc dot edu}}
\author[1]{Tai Qin} 
\author[2]{Bin Yu}
\affil[1]{Department of Statistics, University of Wisconsin Madison}
\affil[2]{Department of Statistics, University of California, Berkeley}
\begin{document}
\maketitle

\begin{abstract}
%
%
Directed graphs have asymmetric connections, yet the current graph clustering methodologies cannot identify the potentially global structure of these asymmetries.  
 We give a spectral algorithm called \textsc{di-sim} that builds on a dual measure of similarity that correspond to how a node (i) sends and (ii) receives edges.  Using \textsc{di-sim}, we analyze the global asymmetries in the networks of Enron emails, political blogs, and the \textit{c elegans} neural connectome.  In each example,  a small subset of nodes have persistent asymmetries; these nodes send edges with one cluster, but receive edges with another cluster.  Previous approaches would have assigned these asymmetric nodes to only one cluster, failing to identify their sending/receiving asymmetries. 
%
%
%

\hspace{.2in} Regularization and ``projection" are two steps of \textsc{di-sim} that are essential for spectral clustering algorithms to work in practice.  The theoretical results show that these steps make the algorithm weakly consistent under the degree corrected Stochastic co-Blockmodel, a model that generalizes the Stochastic Blockmodel to allow for both (i) degree heterogeneity and (ii) the global asymmetries that we intend to detect.  The theoretical results make no assumptions on the smallest degree nodes. Instead, the theorem requires that the average degree grows sufficiently fast and that the weak consistency only applies to the subset of the nodes with sufficiently large leverage scores.  The results results also apply to bipartite graphs. 

\end{abstract}


\section*{Introduction}



The network analysis literature has primarily studied networks with symmetric relationships (i.e. undirected edges).  However, many networks contain asymmetric relationships (i.e. directed  edges).  
 For example, in a communication network, one person calls the other person.  Citation networks, web graphs, and internet networks are also characterized by asymmetric relationships.  Even networks that are often represented as undirected networks of symmetric edges (e.g. Facebook friendships and road networks) are simplifications of an underlying directed network; in Facebook, each friendship is proposed by one of the friends and received by the other friend.  This induces an asymmetry.  In road networks, one is often interested in the flow of traffic;  anyone who has a reverse commute can confirm that traffic flows asymmetrically.  In biochemical cellular networks, a relationship represents the flow of information and/or energy in the cell.  These are causal graphs, and causality requires  direction.  In these examples and in a wide range of other applications, directed networks more accurately represent the underlying data generating mechanism.  


To model the clustering structure of an undirected network, the Stochastic Blockmodel
assigns each actor to one of $k$ blocks and actors in the same block are exchangeable or ``stochastically equivalent" (\cite{white1976social,holland1983stochastic}).  Specifically, $i$ and $j$ are stochastically equivalent if
\[P(\mbox{$i$ connects to $\ell$}) = P(\mbox{$j$ connects to $\ell$}) \  \mbox{ for every actor $\ell$ in the network}.\] 
This paper extends the notion of stochastic equivalence to directed networks in a way that allows for two separate notions of equivalence, ``stochastically equivalent senders" and ``stochastically equivalent receivers".

To estimate these dual notions of equivalence,  we propose \textit{co}-clustering in the context of networks. In a more general setting, \cite{hartigan1972direct} first proposed co-clustering to simultaneously cluster the rows and columns of a data matrix.  In the network setting, the data matrix could be the adjacency matrix or some form of the graph Laplacian; row $i$ gives the sending pattern for actor $i$ and column $j$ gives the receiving pattern for actor $j$.  Co-clustering for network data is particularly interesting because the rows and columns index the same set of nodes. The Stochastic co-Blockmodel (proposed in Section \ref{model}) clarifies the relationship between  co-clustering and the dual notions of stochastic equivalence.  In this model, there are two separate partitions of the actors; one partition for ``stochastically equivalent senders" and the other partition for ``stochastically equivalent receivers".

In addition to providing a novel framework to understanding the asymmetries in a directed graph, the contributions of this paper are threefold.  The first contribution is algorithmic;  Section \ref{sec:coclust} presents \textsc{di-sim}, a novel and computationally tractable spectral algorithm for co-clustering sparse and heterogeneous data matrices.  While spectral algorithms have become popular in the methodological and theoretical literature, their empirical performance is often less than satisfactory. 
The second contribution is methodological and empirical;  Section \ref{Examples} demonstrates that \textsc{di-sim} performs well on three empirical networks.  In each of the three networks,
 \textsc{di-sim} finds a small subset of nodes  with persistent asymmetries,  sending edges to one cluster and receive edges from another cluster.    
The final contribution is theoretical.  To illustrate the types of asymmetries that \textsc{di-sim} estimates, Section \ref{model} proposes the degree corrected Stochastic co-Blockmodel.  Using this model, Theorem \ref{clusterTheorem} gives conditions under which \textsc{di-sim} misclusters a vanishing fraction of the nodes.

The theoretical results are novel in several ways. 
First, Theorem \ref{clusterTheorem} gives the first statistical estimation results results for directed graphs or bipartite graphs with general degree distributions. 
Second, because \textsc{di-sim} uses the leading singular vectors of a sparse and asymmetric matrix, the proof required novel extensions of previous proof techniques.  These techniques extend the spectral results to bipartite graphs; previous results for bipartite graphs have only studied computationally intractable techniques, e.g. \cite{flynn2012consistent, choi}.  
Third,  the main theorem does not presume that the number of sending clusters is equal to the number of receiving clusters and the theoretical results highlight the difficulties presented when they are not equal. 
Fourth, because we study a sparse degree corrected model, the theoretical results highlight the importance of the regularization and projection steps in \textsc{di-sim}.   
Finally, the results do not depend on the minimum node degree. Instead, the weakly connected nodes affect the conclusions through their statistical leverage scores in the observed graph Laplacian.  From the perspective of numerical linear algebra, the leverage scores are essential to controlling the algorithmic difficulty of computing the singular vectors \citep{mahoney2011randomized}.

\section{Co-clustering} \label{sec:coclust}
Co-clustering (a.k.a. bi-clustering) was first proposed in \citet{hartigan1972direct} for data arranged in a matrix $M \in \R^{n \times d}$.  In addition to clustering the rows of $M$ into $k_r$ clusters, co-clustering simultaneously clusters the columns of $M$ into $k_c$ clusters.   In the past decade, co-clustering has  become an important data analytic technique in biological applications (e.g.  \citet{madeira2004biclustering}, \citet{tanay2004revealing}, \citet{tanay2005biclustering}, \citet{madeira2010identification}), text processing (e.g. \citet{dhillon2001co}, \citet{bisson2008chi}), and natural language processing (e.g. \citet{freitag2004trained}, \citet{rohwer2004towards}). In these settings, \citet{banerjee2004generalized} describes how co-clustering dramatically reduces the number of parameters that one needs to estimate.  This leads to three advantages over traditional clustering: (1) more interpretable results, (2) faster computation, and (3) implicit statistical regularization.


Previous applications of co-clustering have involved matrices where the rows and columns index different sets of objects.  For example, in text processing, the rows correspond to documents, and the columns correspond to words.  Element $i,j$ of this matrix denotes how many times word $j$ appears in document $i$.  The row clusters correspond to clusters of similar documents and the column clusters correspond to clusters of similar words.  In contrast, this paper applies co-clustering to a matrix where the rows and columns index the same set of nodes.  The $i$th row of the matrix identifies the \textit{outgoing} edges for node $i$; two nodes are in the same row cluster if they send edges to several of the same nodes.  The $i$th column of this matrix identifies the \textit{incoming} edges for node $i$; two nodes are in the same row co-cluster if they send edges to several of the same nodes.  As such, each node $i$ is in two types of clusters (one for the $i$th column and one for the $i$th row).  Comparing these two distinct partitions of the nodes can lead to novel insights when compared to the standard co-clustering applications where the rows and columns index different sets. The three examples in Section \ref{Examples} will illustrate how this duality can lead to novel interpretations.

This paper proposes and studies  a spectral co-clustering algorithm called \textsc{di-sim}.  Building on previous spectral co-clustering algorithms (e.g. \citet{dhillon2001co}), \textsc{di-sim} incorporates regularization and projection steps.  These two steps are  essential when there is a large amounts of degree heterogeneity and several weakly connected nodes.   The name \textsc{di-sim} has three meanings.  First, because \textsc{di-sim} co-clusters the nodes, it uses two distinct (but related) similarity measures between nodes: ``the number of common parents" and ``the number of common offspring" to create two different partitions of the nodes.  In this sense,  \textsc{di-sim} means two similarities and two partitions.   
Second,  \textsc{di-}  denotes that this algorithm is specifically for \textit{di}rected graphs.  Finally, \textsc{di-sim}, pronounced ``dice `em", dices data into clusters.

\subsection{DI-SIM; a co-clustering algorithm for directed graphs} \label{alg}
Spectral graph algorithms have a rich history in mathematics, computer science, and statistics (e.g \citet{fiedler1973algebraic, chung1997spectral,Koltchinskii2000}) and this line of literature motivates the \textsc{di-sim} algorithm.  For a detailed account of spectral clustering for undirected graphs, see \citet{vonluxburg2007tsc}.  

The essential algorithmic difference between standard spectral clustering  and \textsc{di-sim} is that \textsc{di-sim} uses the singular value decomposition (SVD) instead of  the eigendecomposition. Previously, \citet{dhillon2001co} proposed using the SVD to co-cluster bipartite networks.  More generally, in the age of big data, SVD has become a canonical algorithm for low rank approximations because there are computationally fast implementations and it generalizes the eigendecomposition to general rectangular matrices. It is defined as follows. 
\begin{definition} \label{svddef}
The \textbf{singular value decomposition} (SVD) factorizes a matrix $M \in \R^{n \times d}$ ($n\ge d$) into the product of orthonormal matrices $U\in \R^{n \times d} , V \in \R^{d \times d}$ and a diagonal matrix $\Sigma \in \R^{d \times d}$ with nonnegative entries,
\[M = U\Sigma V^T.\]
\end{definition}
The columns of $U$ contain the \textbf{left singular vectors}.  The columns of $V$ contain the \textbf{right singular vectors}. The diagonal of $\Sigma$ contains the singular values.  If the matrix $M$ is square and symmetric, then the SVD is equivalent to the eigendecomposition and $U = V$.   In this way, SVD is a generalization of the eigendecomposition. Section \ref{sec:svd} (at the end of this paper) briefly highlights the previous algorithms that employ SVD to explore the structure of graphs.

\subsection{The \textsc{di-sim} algorithm}
Let $G = (V,E)$ denote a graph, where $V$ is a vertex set and $E$ is an edge set. The vertex set $V = \{1, \dots , n\}$ contains vertices or nodes. These are the actors in the graph. This paper considers unweighted, directed edges.  So, the edge set $E$ contains a pair $(i,j)$ if there is an edge, or relationship, from node $i$ to node $j$: $i \rightarrow j$.   The graph can  be represented as an adjacency matrix $A \in \{0,1\}^{n \times n}$:
\[
A_{ij} = \left\{\begin{array}{cl}1 & \textrm{ if  $(i,j)$ is in the edge set}  \\
0 &  \textrm{ otherwise.}\end{array}\right.
\]
If the adjacency matrix is symmetric,  then the graph is undirected.  We are interested in exploring the asymmetries in $A$.  

The graph Laplacian is a function of the adjacency matrix. It is fundamental to spectral graph theory and the spectral clustering algorithm (\cite{chung1997spectral, vonluxburg2007tsc}).  Several previous papers have proposed and or studied various ways of regularizing the graph Laplacian; these regularization steps improve the statistical performance of various spectral algorithms (\cite{page1999pagerank, andersen2006local, chaudhuri2012spectral, amini2013pseudo, qin, joseph2013impact}).  This paper generalizes the regularization proposed in \citet{chaudhuri2012spectral} to directed graphs.  Define the regularized graph Laplacian $L \in \R^{n\times n}$ for directed graphs with the diagonal matrices $P \in \R^{n \times n}$ and $O \in \R^{n \times n}$, regularization parameter $\tau \ge 0$, and identity matrix $I \in R^{n \times n}$,
\begin{equation}
	\label{Ldef}
    \begin{array}{lll}
P_{jj} & = & \sum_k A_{kj}\B  = \sum_k \textbf{1}\{ k \rightarrow j\} \ \mbox{ and } \ P^\tau  = P + \tau I; \\   
O_{ii} & = & \sum_k A_{ik}\B = \sum_k \textbf{1}\{i \rightarrow k\}  \ \mbox{ and } \ O^\tau = O + \tau I; \ \mbox{and} \\   
L_{ij} &=& \frac{A_{ij}}{\sqrt{O^\tau_{ii} P^\tau_{jj}}} = \frac{\textbf{1}\{i \rightarrow j\}}{\sqrt{O^\tau_{ii} P^\tau_{jj}}}  = [(O^\tau)^{-1/2}A(P^\tau)^{-1/2}]_{ij}.
\end{array}
\end{equation}
$P_{jj}$ is the number of nodes that send an edge to node $j$, or the number of parents to node $j$.  Similarly,  $O_{ii}$ is the number of nodes to which $i$ sends an edge, or the number of offspring to node $i$. A more standard definition of the graph Laplacian is $I - O^{-1/2}AO^{-1/2}$.   Our definition also uses $P$ in the normalization and it does not contain $I-$.  These changes are essential to our theoretical results and many of the interpretations of \textsc{di-sim} would not hold otherwise.  The regularized degree matrices, $P^\tau$ and $O^\tau$, artificially inflate every degree by a constant $\tau$.  In the setting of undirected graphs, \citet{qin} showed that in order to make the asymptotic bounds informative, $\tau$ should grow proportionally to the average node degree,  $\sum_i O_{ii}/n$. Note that $\sum_i O_{ii}/n = \sum_j P_{jj}/n$ since the out degree equals to the in degree.  We use the average node degree as the default value for $\tau$.  

To apply \textsc{di-sim} to a bipartite graph on disjoint sets of vertices $U$ and $V$ (e.g. $U$ contains words and $V$ contains documents), let $U$ index the rows of $A$ and $V$ index the columns of $A$.  As such, $A$ is rectangular and $A_{ij} = 1$ if and only if $i \in U$ shares an edge with $j \in V$ (e.g. word $i$ is contained in document $j$).  While the dimensions of $O, P,$ and $L$ must change to reflect that $A$ is rectangular, the definitions in Equations \eqref{Ldef}  remain the same. 

Throughout, for $x \in \R^d$, $\|x\|_2 = \sqrt{\sum_{i = 1}^d x_i^2}$, for $M \in \R^{d \times p}$, $\|M\|$ denotes the spectral norm and $\|M\|_F$ denotes the Frobenius norm.   With the above notation, \textsc{di-sim} is defined as follows.


\vspace{.2 in}
\noindent
\framebox[5.9 in][c]{  \begin{minipage}[l]{5.7in}
{\textsc{di-sim}}
\vspace{.05in}

Input: Adjacency matrix $A \in \{0,1\}^{n \times n}$, regularizer $\tau \ge 0$ (Default: $\tau = $ average node degree), number of row-clusters $k_y$, number of column-clusters $k_z$.
\begin{enumerate}
\item[(1)] Compute the regularized graph Laplacian $L = (O^\tau)^{-1/2}A(P^\tau)^{-1/2}$.
\item[(2)]  Compute the top $K$ left and right singular vectors $X_L \in \R^{n \times K}, X_R \in \R^{n \times K}$, where $K = \min \{k_y,k_z\}.$
\item[(3)] Normalize each row of $X_L$ and $X_R$ to have unit length. That is, define $X_L^* \in \R^{n \times K}, X_R^* \in \R^{n \times K}$, such that 
\[[X_L^*]_i = \frac{[X_L]_i}{\|[X_L]_i\|_2}, \  [X_R^*]_j = \frac{[X_R]_j}{\|[X_R]_j\|_2},\]
where $[X_L]_i$ is the $i$th row of $X_L$ and similarly for $[X_L^*]_i, [X_R]_j, [X_R^*]_j$.
\item[(4)] Cluster the rows of $X_L^*$ into $k_r$ clusters with $(1+\alpha)$-approximate $k$-means  (\citet{kumar2004simple}). Because each row of $X_L^*$ corresponds to a node's sending pattern in the graph, the results  cluster the nodes' sending patterns.
\item[(5)] Cluster the receiving patterns by performing step (4) on the matrix $X_R^*$ with $k_z$ clusters. 
\end{enumerate}
{Output: The clusters from step (4) and (5). }
 \end{minipage}
}

\vspace{.2 in}

When $A$ is undirected, then the left and right singular vectors of $L$ are equal to each other and equal to the eigenvectors of $L$.  In this special case, \textsc{di-sim} is equivalent to previous versions of undirected spectral clustering (e.g. see \cite{vonluxburg2007tsc}, \cite{qin}).

 \subsection{Interpreting the singular vectors}
This subsection examines how the singular vectors of $L$ correspond to the following dual measures of similarity: ``number of common parents" and ``number of common offspring".  Recall that the SVD expresses a matrix $M\in \R^{n \times d}$  as the product of three matrices, $M = U\Sigma V^T$.  Lemma \ref{svdlem} shows how to compute the matrices $U, V$, and $\Sigma$, giving insight into the similarity measures used by \textsc{di-sim}.  The lemma follows from Lemma 7.3.1 in \citet{horn2005matrix}.

\begin{lemma}\label{svdlem}  With SVD, $M = U\Sigma V^T$ for $M\in \R^{n \times d}$. The matrices $U$ and $V$ contain the eigenvectors to the symmetric, positive semi-definite matrices $MM^T$ and $M^TM$ respectively.  Both $MM^T$ and $M^TM$ have the same eigenvalues and these values are contained in the diagonal of $\Sigma^2$.  If $M$ is symmetric, $U$ contains the eigenvectors of $M$ and $U=V$.  
\end{lemma}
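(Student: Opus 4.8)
The plan is to take the existence of the SVD as given — this is exactly the content of Lemma 7.3.1 in \cite{horn2005matrix} — so that we may start from matrices $U\in\R^{n\times d}$ with orthonormal columns ($U^TU = I_d$), $V\in\R^{d\times d}$ orthogonal ($V^TV = VV^T = I_d$), and diagonal $\Sigma\in\R^{d\times d}$ with nonnegative entries, satisfying $M = U\Sigma V^T$. Everything the lemma asserts then follows by direct matrix algebra; there is no deep obstacle, only a mild subtlety about non-uniqueness in the symmetric case, flagged at the end.

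First I would form the two Gram matrices and cancel the orthonormal factors. Using $V^TV = I_d$,
\[
MM^T = U\Sigma V^T V \Sigma^T U^T = U\Sigma^2 U^T,
\]
and using $U^TU = I_d$,
\[
M^TM = V\Sigma^T U^T U \Sigma V^T = V\Sigma^2 V^T,
\]
where $\Sigma^2 = \Sigma\Sigma^T = \Sigma^T\Sigma$ since $\Sigma$ is square and diagonal. The second identity is already a full eigendecomposition of the $d\times d$ matrix $M^TM$ because $V$ is orthogonal, so the columns of $V$ are orthonormal eigenvectors of $M^TM$ with eigenvalues equal to the diagonal entries of $\Sigma^2$. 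For $MM^T$, the columns of $U$ are orthonormal and satisfy $MM^T U = U\Sigma^2$, so they too are eigenvectors with those same eigenvalues; when $n>d$ one completes $U$ to an orthonormal basis $[\,U\ \ U_\perp\,]$ of $\R^n$ and checks $MM^T U_\perp = 0$, accounting for the remaining $n-d$ zero eigenvalues. Symmetry of both matrices is immediate, and positive semidefiniteness follows from the identities $x^TMM^Tx = \|M^Tx\|_2^2 \ge 0$ and $x^TM^TMx = \|Mx\|_2^2 \ge 0$.

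For the symmetric case, suppose $M = M^T$ (hence $n=d$) with spectral decomposition $M = Q\Lambda Q^T$, $Q$ orthogonal and $\Lambda = \mathrm{diag}(\lambda_1,\dots,\lambda_n)$ real. Setting $S = \mathrm{diag}(\mathrm{sign}(\lambda_i))$, $\Sigma = \mathrm{diag}(|\lambda_i|)$, $U = Q$, and $V = QS$ yields a valid factorization $M = U\Sigma V^T$ in which $U$ holds the eigenvectors of $M$; if $M$ is additionally positive semidefinite (as for the Laplacians considered here) then every $\lambda_i\ge 0$, so $S = I$ and $U = V = Q$. The one point that needs care — the only obstacle worth naming — is that the SVD is not unique, so these assertions should be read as ``one may choose the SVD so that\dots,'' with the discrepancy between the columns of $U$ and those of $V$ being precisely the sign pattern of the eigenvalues of $M$ in the general symmetric case.
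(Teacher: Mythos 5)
Your proof is correct. The paper itself offers no argument for this lemma --- it simply defers to Lemma 7.3.1 of \cite{horn2005matrix} --- so your direct verification is genuinely more self-contained: you cancel the orthonormal factors to get $MM^T = U\Sigma^2 U^T$ and $M^TM = V\Sigma^2 V^T$, read off the eigenpairs, handle the $n>d$ rank deficiency by completing $U$ to a basis, and check positive semidefiniteness from $\|M^Tx\|_2^2$ and $\|Mx\|_2^2$. This buys the reader an actual proof where the paper has only a citation, at the cost of a few lines of routine algebra. Your closing observation is also a genuine (if minor) improvement on the statement itself: as written, ``$U=V$'' for symmetric $M$ is only true when $M$ is positive semidefinite; in general $V = US$ where $S$ carries the signs of the eigenvalues, so the columns of $U$ and $V$ agree only up to sign. (One small nit: for zero eigenvalues you should set the corresponding diagonal entry of $S$ to $1$ rather than $\mathrm{sign}(0)=0$, so that $V=QS$ remains orthogonal.) Since the Laplacian-derived matrices $L^TL$ and $LL^T$ to which the lemma is applied are positive semidefinite, the paper's unqualified claim is harmless in context, but your caveat is the right one to record.
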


This implies that \textsc{di-sim} uses the eigenvectors of two matrices, $L^TL$ and $LL^T$.  To understand these matrices, first look at $A^TA$ and $AA^T$.  
\begin{eqnarray*}
(A^TA)_{ab} &=&  \sum_x \textbf{1}\{x \rightarrow a \mbox{ and } x \rightarrow b\}: \ \ \mbox{The number of common ``parents".} \label{ata}\\
(AA^T)_{ab} &=& \sum_x \textbf{1}\{a \rightarrow x \mbox{ and } b \rightarrow x\}: \ \ \mbox{The number of common ``offspring".}  \label{aat}
\end{eqnarray*}
These two similarity matrices are symmetric and easily interpretable.  
$LL^T$ and $L^TL$ perform a similar task while down-weighting the contribution of high degree nodes and utilizing the regularization parameter $\tau$.


\section{Applications where asymmetric relationships allow for novel insights} \label{Examples}

The next three subsections use \textsc{di-sim} to examine the asymmetries in (i) the email communication network at Enron; (ii) the network of hyperlinks among a set of political blogs; and (iii) the neural connectome of a primitive worm, \textit{c elegans}.  These examples demonstrate how one can leverage the graph asymmetries to make novel insights into the  graph structure.  The examples also demonstrate simple modifications of \textsc{di-sim} that are appropriate in various settings. 

\subsection{Detecting malfeasance at Enron} \label{enron}

The defunct corporation Enron went bankrupt on December 2, 2001 because ``its reported financial condition was sustained substantially by an institutionalized, systematic, and creatively planned accounting fraud" (\citet{wikipedia}).  This section  examines a communication network formed with a portion of the corporations' emails that were made publicly available as a result of the federal investigation into corporate misconduct. We use  \textsc{di-sim} to search for ``bottleneck" communicators, or people that relayed information from one part of the organization to another.

The emails used in the following analysis form a communication network for 154 employees of Enron between 1998 and 2002 (\cite{cohen}).  In our analysis, we set $A_{ij}$ as the number of emails that $i$ sends to $j$ over the entire time period.  This is a weighted network.  While the data set also provides the text of the emails,  we  only use the ``metadata", i.e. the network $A$.  

\begin{figure}[h] 
   \centering
	\includegraphics[width = 6in]{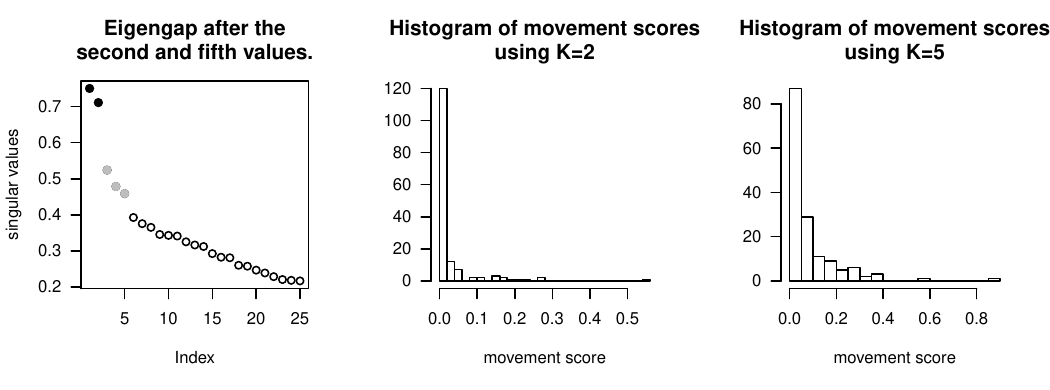}
   \caption{The left panel displays the top 25 singular values of $L$.  There are two eigengaps.  The first eigengap, suggests $K = 2$ using the singular values in solid black.  The second eigengap, suggests $K = 5$ by adding the singular values in solid grey.  Using $K =2$, the center panel gives a histogram of the movement scores $m_i^{(K)}$ as defined in Equation \ref{move}. The right panel gives a histogram of the movement scores using $K = 5$.  Each histogram has an outlier.  For $K =2$, the outlier is Enron's Director for Regulatory and Government Affairs Jeff Dasovich.  For $K=5$, the outlier is Bill Williams who is discussed in the text below.}
   \label{fig:enron}
\end{figure}

\subsubsection{Data Analysis} 
This section does not use the full \textsc{di-sim} algorithm; there are two simplifications.
\begin{enumerate}
\item The rows of the singular vector matrices $X_L, X_R \in R^{1222 \times K}$ are not projected onto the unit sphere.  That is, step (c) in \textsc{di-sim} is skipped.
\item Instead of running k-means,  the asymmetry in the graph is investigated  by directly comparing the rows of $X_L$ and $X_R$ with the \textbf{movement score}, defined as 
\begin{equation} \label{move}
m_i^{(K)} = \left( \sum_{\ell = 1}^K ([X_L]_{i\ell} - [X_R]_{i \ell})^2\right)^{1/2}.
\end{equation}
If one were to ignore edge direction by symmetrizing the network, then $m_i^{(K)}$ would be zero for all $i$.  As such, it measures the asymmetry in a node's connections.  
\end{enumerate}

The left panel in Figure \ref{fig:enron} displays the top 25 singular values of $L$.  The center panel gives the histogram of the movement scores $m_i^{(2)}$.  The right panel gives the histogram of $m_i^{(5)}$. The outlier for $K =2$ is  Enron's Director for Regulatory and Government Affairs Jeff Dasovich.  Using $K = 5$, the outlier is an energy trader at Enron named Bill Williams.  
%
%

The large movement scores for Dasovich and Williams could be due to three possibilities.  First, they could  receive information from one part of the network and transmits it to another part of the network (i.e. act as a bottleneck communicator); second, they could take information and not relay that information; or third, they could receive little information through this communication network, but transmit lots of information.  In fact, in the weighted network ($A_{ij}$ is number of emails from $i$ to $j$), Dasovich has the largest out-degree and Williams' has the 10th largest out-degree.  Dasovich has the ninth largest in-degree and Williams has the 45th highest in-degree (out of $n = 154$).  This rules out the second and third possibilities, suggesting that both Dasovich and Williams are bottleneck communicators.

Although such network patterns do not necessarily imply criminal activity, the analysis identifies Enron employee Bill Williams as a clear outlier.  Using qualitative evidence not associated with the methods presented here, Williams was convicted of creating artificial energy shortages by ordering power plants to temporarily shut down.  The New York Times reported on the incident and quoted from audio recordings of Bill Williams telling a power plant to shut down. The day after that audio recording, roughly half a million Californians suffered from rolling blackouts (\cite{nytimes}). 

%
%
%
%
%

Although Williams' communications with the power plant make him a bottleneck communicator, it is worth noting that our vertex set in this data does not contain people outside of Enron.  As such, Williams was identified for playing the bottleneck communicator for other activities within Enron.   Importantly, this analysis would have been infeasible if we had ignored edge direction.  The data in this section have been extensively preprocessed by \citet{zhou2007strategies} and \citet{perry2010point}.

\subsection{Blog network during the 2004 US presidential election} \label{blog}
 
In the 2004 US presidential election, political blogs contributed to the election media landscape for the first time.  In order to better understand the role of these blogs, \cite{Adamic} recorded the hyperlink connections among these blogs and found that the connections between blogs were highly related to the blog's political persuasion.  In subsequent research, \cite{karrer2011stochastic, chen2012fitting}, and  \cite{zhao2012consistency} estimated the political partition from the network alone.  In contrast to the work presented here, each of these previous analyses symmetrized the edge directions.  As such, they found a single partition of the blogs into conservative and liberal blogs.  However, co-clustering with \textsc{di-sim} finds two partitions, one based on how the blogs send hyperlinks and another based on how the blogs receive hyperlinks.  These two partitions are roughly similar.\footnote{Both partitions roughly align with the political divide of liberal vs.\ conservative blogs.}  This suggests that most blogs with similar sending patterns have similar receiving patterns. However,  some blogs send hyperlinks to conservative blogs and receive hyperlinks from liberal blogs, and vice versa.  For these blogs, the direction of the edges is particularly salient.   Using \textsc{di-sim}, we seek to identify and characterize these blogs.  Figure \ref{bottleFig} illustrates how a node could belong to opposite sending and receiving clusters.  


\begin{figure}[htbp] 
   \centering
   \includegraphics[width=6in]{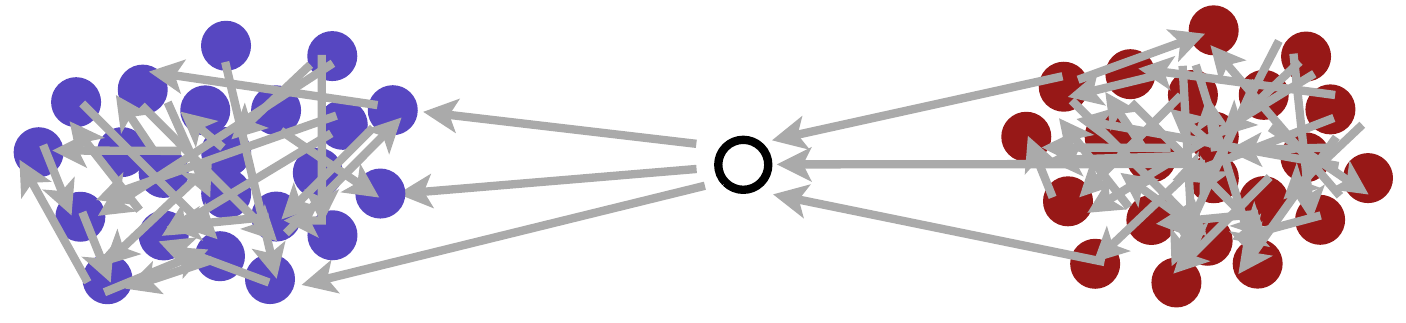} 
   \caption{In this diagram, there are two clusters and a bottleneck node between the two clusters.  In the sending cluster, this node joins the nodes on the left.  In the receiving cluster, this node joins the nodes on the right. }
   \label{bottleFig}
\end{figure}




\subsubsection{Data description}
To create the network, \cite{Adamic} curated  a list of the top 1,494 political blogs and, in February of 2005, (a)  recorded the front page of each blog and (b) identified the hyperlinks that point to other blogs on the list.  From these links, \cite{Adamic} created a directed network.\footnote[1]{See \cite{Adamic} for a more complete description of how the list of 1,494 blogs was curated.}  Each blog was identified as liberal or conservative.  Some of these labels were manually identified and some of the labels are self-reported to one of several blog directories. While these labels may be subject to various types of errors, they are generally consistent with the network connectivity and the names of the blogs (e.g. xtremerightwing.net vs. loveamericahatebush.com).  We will thus refer to these labels as the true labels.  To refer to the blogs on either side of the political partition, we will use the terms $\{$Kerry, dem, liberal$\}$ interchangeably and the terms $\{$Bush, gop, conservative$\}$  interchangeably.

We restrict our analysis to the 1,222 blogs in the largest connected component; this removes 266 nodes with no edges and two blogs linked together without any connections to the largest component.  Our analysis concerns the 586 liberal blogs and 636 conservative blogs that remain.  While this network is sparse (the average degree is 16)  clustering is feasible because there are roughly 10 times as many edges between blogs of the same party than between blogs of different party affiliations \citep{Adamic}.

Because there are two political parties, we set $k=2$ for both the sending and receiving  clusters.  This section makes one modification to \textsc{di-sim}.  Instead of running k-means twice (once for $X_L \in \R^{1222 \times 2}$ and once for $X_R\in \R^{1222 \times 2}$), the analysis runs k-means on $X_L$ and $X_R$ simultaneously.  That is, $X_L$ and $X_R$ are stacked into a single tall matrix in $\R^{2444 \times 2}$ and k-means is run on the rows of this tall matrix.  This makes the labels of the left and right clusters comparable.  After running k-means on the 1,222 blogs in the largest connected component, subsequent analysis is restricted to 
the blogs that have at least three incoming edges and at least three outgoing edges.  There are 549 such blogs. Of these blogs, 543 are clustered into sending and receiving  clusters that are nearly identical; this partition broadly agrees with the true labels of ``Kerry" and ``Bush" blogs. 

While 543 of the 549 blogs are clustered into identical sending and receiving clusters, the remaining six are clustered into different sending and receiving clusters (See Table 1).  Five of these blogs are ``dem2gop" blogs that appear to take links from Kerry (i.e. dem) blogs and send links to Bush (i.e. gop) blogs. The final blog in the table (quando.net) is the only ``gop2dem" blog, taking more edges from Bush blogs and sending links to Kerry blogs. 

All of the six blogs are labeled as Kerry blogs.  
However, we visited the blog urls and performed related web searches (Table 2).  Many of the sites are now defunct.  Interestingly, the only gop2dem blog in the entire analysis, quando.net, appears mislabeled in the original data set.  \citet{Adamic} label it as a Kerry blog.  Upon closer inspection, this blog hosts a collection of conservative/libertarian bloggers (e.g. ``Face it - the only thing Bush can brag about is his comparative conservative advantage over Kerry. And that's akin to saying a tornado is--comparatively--better at home improvement projects than a hurricane.").  

This analysis finds six blogs with asymmetric community memberships.  Each of these six blogs appear to be doing ``opposition research," where they link to blogs that hold different political views.  As such, asymmetric blogs link to content that they dislike. We found no evidence of any asymmetric blogs receiving links from the opposite party. This suggests that the incoming edges appear to be more informative for detecting the community membership of a political blog.  This analysis is only feasible because \textsc{di-sim} respects the asymmetry between incoming and outgoing edges.


\begin{table} 
\caption{Of the 549 blogs that have at least three incoming edges and at least three outgoing edges, these are the only six blogs whose receiving cluster (from.cluster) is different from their sending cluster (to.cluster).  The numbers to the right of the line are generated using the labels provided in the data set.  These numbers reveal that di-sim identifies the nodes with asymmetric relationships between the true blocks.}

\begin{tabular}{lc|llll}
  \hline
 blog url & from.clust 2 to.clust & from.dem & from.gop & to.dem & to.gop \\ 
  \hline
chepooka.com & dem2gop & 13 & 2 & 1 & 2 \\ 
clarified.blogspot.com & dem2gop & 4 & 2 & 0 & 6 \\ 
politics.feedster.com & dem2gop & 3 & 0 & 13 & 18 \\ 
polstate.com & dem2gop & 31 & 7 & 3 & 2 \\ 
shininglight.us & dem2gop & 2 & 2 & 4 & 7 \\ 
qando.net & gop2dem & 5 & 57 & 14 & 10 \\ 
   \hline
\end{tabular}

\end{table}

\begin{table} 
\caption{After accounting for the fact that the data set appears to mislabel quando.net as a liberal blog, all asymmetric blogs link to blogs of the opposite political leaning. }
\begin{tabular}{l|ll}
  \hline
 blog url & label in data set  & upon visit\\ 
  \hline
chepooka.com & liberal & unclear, possibly defunct\\
clarified.blogspot.com & liberal & Kerry supporter\\
politics.feedster.com & liberal & defunct, evidence for Kerry supporter\\
polstate.com & liberal & defunct, old twitter feed self-identifies as ``pan-partisian"\\
shininglight.us & liberal & defunct \\
qando.net & liberal & collection of \textit{conservative} bloggers,  \\
&& \ \ see \verb"http://www.qando.net/archives/2004_09.htm"\\
   \hline
\end{tabular}

\end{table}


\subsection{The neural connectome of \textit{c elegans}}  \label{worm}

This section examines the neural connectome of the male \textit{Caenorhabditis elegans} (\textit{c elegans}), a 1mm long worm.  The chemical connections between the neurons of \textit{c elegant} create a directed network and a directed analysis highlights vast dissimilarities between the sending and receiving patterns.  Said another way, $X_L$ represents a different structure than $X_R$.  For some neurons, this reflects previously understood behavior that is relevant to the understanding of the connectome (e.g. see Figure 6 in \cite{jarrell2012connectome} for a discussion of the role of PVV neurons in feedforward loops).  For other neurons, our analysis suggests areas for future inquiry.  
%
%

\subsubsection{Data description} \textit{c elegans} is well suited to laboratory research--it is easy to store and reproduce because it is 1mm in length, it is easy to witness an organism's state because its exterior is transparent, and the anatomy is easy to identify and catalogue because the adult male is composed of exactly 1031 cells, of which exactly 383 are neurons.  As a result, \textit{c elegans} has become a model organism for several areas of biological research, including neurology.  For example, it was the first organism with a fully sequenced genome and also the first with a complete wiring diagram of the neurological connections (\cite{white1986structure}). 

Our analysis concerns the chemical connections in the connectome of the male \textit{c elegans}. \citet{jarrell2012connectome} mapped the posterior neural connectome of the male \textit{c elegans} by slicing the posterior of the 1mm long worm into a series of 5,000 serial slices, 70 nm to 90 nm thick.  Each slice was imaged with an electron microscope, and the neurons from each slice were mapped to the neurons in the adjacent slices.  Piecing these mappings together created a three dimensional image of the organism that reveals the synaptic connections between the neurons.  
The construction of the connectome used both computational tools for automatic information extraction and a substantial amount of human judgment.  

   This section investigates the directed graph that encompasses the chemical connections among the neurons, muscles, and gonad.  In the posterior chemical connectome, there are 
   \begin{itemize}
   \item 126 nodes that send at least one edge and receive at least one edge, 
   \item one node that sends at least one edge and receives no edges, and   
   \item  73 nodes that send no edges and receive at least one edge.
   \end{itemize}
  Of the nodes that send at least one edge, the average out degree is 18.  Of the nodes that receive edges, the average in degree is 11.5.  Both of these degree calculations are on the \textit{unweighted} graph.  In fact, each edge has an edge weight that corresponds to the size of the synaptic connection.  The larger connections produce a more robust connection between neurons.  More details can be found in \citet{jarrell2012connectome}.  
The distribution of these edge weights has a long tail.  Based on a preliminary analysis, the edge weights were log-transformed.

The analysis uses \textsc{di-sim} with the default value of $\tau$.  Because the original paper \citet{jarrell2012connectome} estimated seven communities, we also estimate seven communities.  
After normalizing the rows of $X_L \in \R^{127 \times 7}$ and $X_R \in \R^{199 \times 7}$, these matrices are combined into a single, taller matrix in $\R^{326 \times 7}$ and k-means is applied to this matrix with $K = 7$.   By running k-means once (instead of twice), the sending and receiving clusters are more easily comparable because they have the same cluster center.

\subsubsection{Results} 
Using \textsc{di-sim} to estimate the sending and receiving clusters, Figure \ref{fig:B} shows how the co-clusters connect to each other.  It displays the matrix $\hat B$ which is an estimate of the matrix $B$ in Definition \ref{directedBlockmodel}.  $\hat B_{u,v}$ is a proportion.  The denominator is the number of node pairs $(i,j)$ with $i$ in sending cluster $u$ and $j$ in receiving cluster $v$.  The numerator is the number of such pairs that connect.  This matrix has a strong  diagonal which suggests that if an edge comes from sending block $u$, then it probably points to a node in receiving block $u$.
While \textsc{di-sim} was run on the weighted graph, Figure \ref{fig:B} computes $\hat B$ with the unweighted graph.  When $\hat B$ is computed on the weighted graph (i.e. $\hat B_{u,v}$ is average weight of the edges from block $u$ to block $v$), the results are largely unchanged.

\begin{figure}[h] 
   \centering
	\includegraphics[width = 4in]{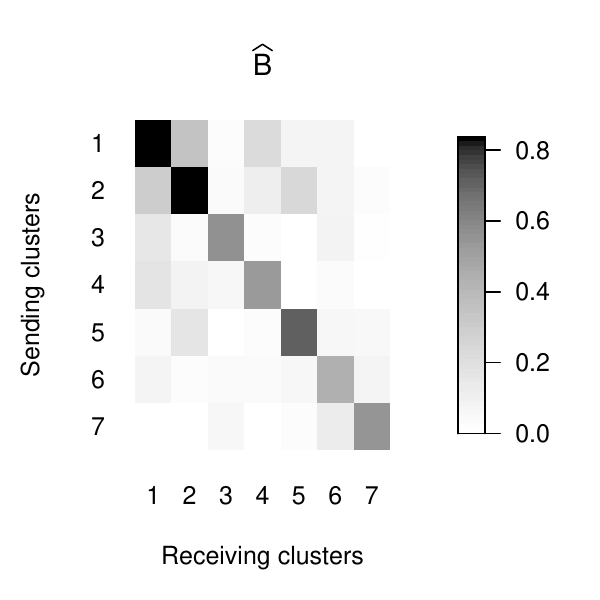}
   \caption{Element $u,v$ is darker when there are more edge from block $u$ to block $v$.      A strong diagonal in this matrix suggests that if an edge comes from a node in sending block $u$, then it probably goes to a node in receiving block $u$. } 
   \label{fig:B}
\end{figure}

Figure \ref{fig:big} presents the left and right partitions of the \textit{c elegans} connectome as estimated by \textsc{di-sim}.  The figure compares the two \textsc{di-sim} partitions with the single partition estimated in the original paper (\cite{jarrell2012connectome}) in which they used the spectral technique of \citet{leicht2008community}.  

\begin{figure}[p] 
   \centering
	\includegraphics[height = 8in]{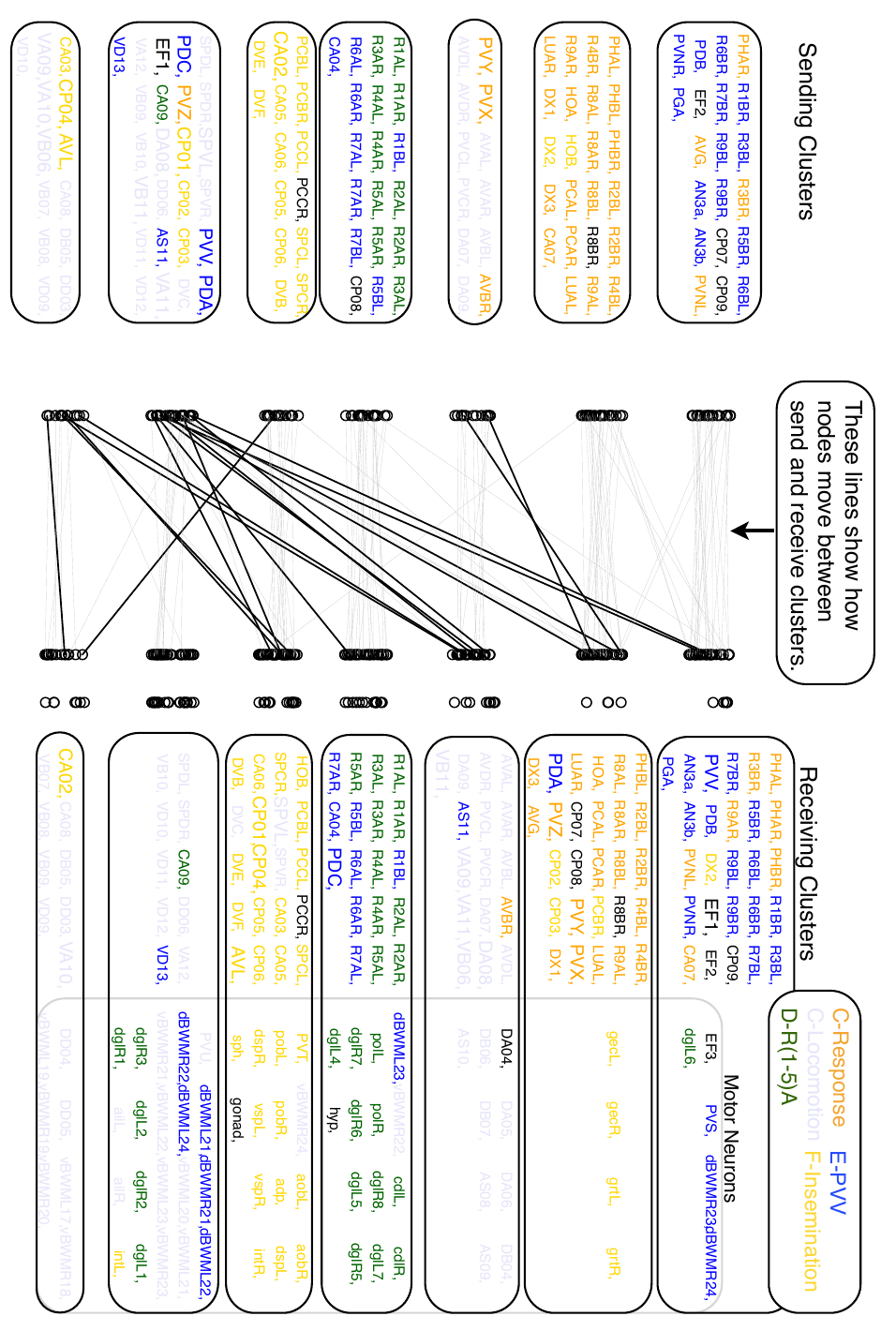}
   \caption{Cluster 1 in Figure \ref{fig:B} corresponds to the top cluster in this figure, cluster 2 corresponds to the second cluster from the top, and so on. }
   \label{fig:big}
\end{figure}

Figure \ref{fig:big} presents three partitions of the nodes.  The first two partitions correspond to the sending clusters (on left) and receiving clusters (on right) in \textsc{di-sim}.\footnote[1]{Some nodes are listed off to the right side of the receiving cluster.  These are nodes that do not send any edges, thus they do not have a sending cluster.  These nodes are largely motor neurons that control muscles.}  Because the k-means step was run only once, the left and right clusters are comparable.  So, the vertical orientation of the clusters is informative because the $i$th sending cluster from the top sends several edges to the $i$th receiving cluster from the top.  

Each neuron has exactly one line that connects the node's sending cluster to the node's receiving cluster.\footnote{The lines in Figure \ref{fig:big} do \textit{not} represent the edges in the graph.}  Darker lines indicate that the neuron moves further between its left and right representations in the singular vectors (as measured by the movement score in Equation \ref{move}).  If there were no co-clustering structure, then all of the lines would be horizontal.  However, several lines traverse diagonally, connecting different clusters, and thus indicating non-trivial co-clustering structure.  To  identify which neurons move a further distance, they are written in a slightly larger font in the sending and receiving clusters.

The final partition represented in Figure \ref{fig:big} is represented by the color of the text; this partition  corresponds to the communities or modules estimated in  \cite{jarrell2012connectome}.  The sensory input to the Response Module (orange) comes from the ventral side of the worm's fan; this module plays an important role in helping the worm physically align with another worm for reproduction.  The response module feeds into the Locomotion Module (pink).  The locomotion module contains the body-wall motor neurons, helping the worm to move. The R(1-5)A module (green) contains sensory neurons that ``promote ventral curling of the tail during mating".  The PVV Module (blue) is likely ``involved in aspects of male posture during mating".  The Insemination Module (yellow) contains neurons that ``will take over the male's behavior once the vulva is sensed".  The interpretation of the clusters in \cite{jarrell2012connectome} comes from that paper.

Figure \ref{fig:big} suggests that there is co-clustering structure beyond the standard one-way clustering.  In particular,  	PVV, PVX, PVY, and PVZ neurons are all in large bold font because they have large movement scores.  These findings are consistent with the discussion of feedforward circuits in \cite{jarrell2012connectome}.  In particular, Figure 6 in \cite{jarrell2012connectome} illustrates how these neurons (PVV, PVX, PVY, and PVZ) send most of their edges to neurons in separate clusters.  


While Figure \ref{fig:B} shows that most edges stay within the same cluster, Figure \ref{fig:big} shows that many of the nodes do not stay within the same cluster.  In particular ten of the 25 nodes in sending cluster 2 are not in receiving cluster 2; they move.  Instead of using the symmetric notion of clustering, this analysis co-clusters the sending and receiving patterns with \textsc{di-sim}, revealing several persistent asymmetries across the network.

\section{Stochastic co-Blockmodel} \label{model}

This section proposes a statistical model for a directed graph with dual notions of stochastic equivalence.  Despite the fact that \textsc{di-sim} is not a model based algorithm, when the graph is sampled from this model, \textsc{di-sim} will estimate these dual partitions.


\subsection{Stochastic equivalence, a model based similarity} \label{modsim}
Stochastic equivalence is a fundamental concept in classical social network analysis. In the Stochastic Blockmodel, two nodes are in the same block if and only if they are stochastically equivalent (\cite{holland1983stochastic}). 
In a directed network, two nodes $a$ and $b$ are stochastically equivalent if and only if both of the following hold:
\begin{eqnarray}
\label{sep} P(a \rightarrow x) &=& P(b \rightarrow x) \ \ \forall x \ \mbox{ and } \\
\label{seo} P(x \rightarrow a) &=& P(x \rightarrow b) \ \ \forall x
\end{eqnarray}
where $a \rightarrow x$ denotes the event that $a$ sends an edge to $x$.   Separating these two notions allows for co-clustering structure. Two nodes $a$ and $b$ are \textit{stochastically equivalent senders} if and only if Equation \ref{sep} holds.  Two nodes $a$ and $b$ are \textit{stochastically equivalent receivers} if and only if Equation \ref{seo} holds.  These two concepts correspond to a model based notion of co-clusters and they are simultaneously represented in the new Stochastic co-Blockmodel.

\subsection{A statistical model of co-clustering in directed graphs}

The Stochastic Blockmodel provides a model for a random network with $K$ well defined blocks, or communities (\cite{holland1983stochastic}).  
%
%
%
The Stochastic co-Blockmodel is an extension of the Stochastic Blockmodel.  

This model naturally generalizes to bi-partite graphs, where the rows and the columns of $A$ index different sets of actors (e.g. words and documents).  As such, the rest of the paper allows for a different number of rows ($N_r$) and columns ($N_c$) in the adjacency matrix $A$.  Using the notation from the previous sections, a directed graph would satisfy $N_r = N_c=n$.

\begin{definition} \label{directedBlockmodel}Define three nonrandom matrices, $Y \in \{0,1\}^{N_r \times k_y}, Z \in \{0,1\}^{N_c \times k_z}$ and $B \in [0,1]^{k_y \times k_z}$.  Each row of $Y$ and each row of $Z$ has exactly one 1 and each column has at least one 1.  Under the \textbf{Stochastic co-Blockmodel} (ScBM), the adjacency matrix $A \in \{0,1\}^{N_r \times N_c}$ is random such that $\E(A) = Y B Z^T$.  Further, each edge is independent, so the probability distribution factors
\[P(A) = \prod_{i, j} P(A_{ij}).\]
\end{definition}

Without loss of generality, we will always presume that $k_y \le k_z$.  

In the Stochastic Blockmodel, $\E(A) = ZBZ^T$.  In the ScBM, $\E(A) = YBZ^T$. In this definition, $Y$ and $Z$ record two types of block membership which correspond to the two types of stochastic equivalence (Equations  \ref{sep} and \ref{seo}).  Denote $y_i$ as the $i$th row of $Y$ and $z_i$ to be the $i$th row of $Z$.
\begin{prop} \label{stocheq}
Under the ScBM for a directed graph,   if $y_i = y_j$, then nodes $i$ and $j$ are stochastically equivalent senders, Equation \ref{sep}.  Similarly, if $z_i = z_j$, then nodes $i$ and $j$ are stochastically equivalent receivers, Equation \ref{seo}.
\end{prop}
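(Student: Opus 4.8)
The plan is to read off both claims directly from the defining relation $E(A) = Y B Z^T$ together with the fact that each entry $A_{ij}$ is $\{0,1\}$-valued. First I would note that, because $A_{ij}\in\{0,1\}$, the event $\{i \rightarrow j\}$ is exactly the event $\{A_{ij}=1\}$, so $P(i \rightarrow j) = P(A_{ij}=1) = E(A_{ij})$. Combining this with Definition \ref{directedBlockmodel} gives, for every pair of indices $i,x$,
\[
P(i \rightarrow x) = E(A_{ix}) = (Y B Z^T)_{ix} = \sum_{a=1}^{k_y}\sum_{b=1}^{k_z} Y_{ia} B_{ab} Z_{xb} = y_i B z_x^T ,
\]
where $y_i$ and $z_x$ denote the $i$th row of $Y$ and the $x$th row of $Z$. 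The key observation is that this probability depends on the sending node only through the row vector $y_i$, and on the receiving node only through $z_x$.

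Given the displayed identity, the two assertions are immediate. For the sender claim, if $y_i = y_j$ then $P(i \rightarrow x) = y_i B z_x^T = y_j B z_x^T = P(j \rightarrow x)$ for every $x$, which is exactly Equation \eqref{sep}. For the receiver claim I would apply the same computation with the roles of the two indices swapped: $P(x \rightarrow i) = E(A_{xi}) = (YBZ^T)_{xi} = y_x B z_i^T$, so if $z_i = z_j$ then $y_x B z_i^T = y_x B z_j^T$ for every $x$, giving Equation \eqref{seo}.

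There is no real obstacle here; the one point worth stating carefully is the scope of the conclusion. "Stochastically equivalent senders/receivers" as defined above constrains only the one-edge marginal probabilities in \eqref{sep}--\eqref{seo}, so the argument uses nothing beyond $E(A) = Y B Z^T$ and the Bernoulli nature of the entries — in particular it needs neither the edge-independence factorization of $P(A)$ nor any identifiability (e.g.\ full-rank) hypothesis on $B$. Those extra assumptions would be needed only for a converse ("$y_i = y_j$ if nodes $i,j$ are stochastically equivalent senders") or for the stronger full stochastic equivalence of \cite{holland1983stochastic}, neither of which is claimed in the proposition.
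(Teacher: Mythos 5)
Your argument is correct and is essentially the paper's own proof: both reduce the claim to the observation that $y_i = y_j$ forces the $i$th and $j$th rows of $E(A) = YBZ^T$ to coincide (and likewise for columns when $z_i = z_j$), so the Bernoulli edge probabilities agree for every third node $x$. Writing out $P(i \rightarrow x) = y_i B z_x^T$ explicitly and the remark about which hypotheses are actually used are fine elaborations but do not change the route.
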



\citet{wang1987stochastic} previously proposed and studied a directed Stochastic Blockmodel.  However, our aims are different. 
Where \citet{wang1987stochastic} sought to understand the dependence between $A_{ij}$ and $A_{ji}$, the current paper seeks to understand the co-clustering structure of the blocks.  Importantly, where we use two types of stochastic equivalence (sending and receiving), \citet{wang1987stochastic} uses only one type of stochastic equivalence which implies that if two nodes are stochastically equivalent senders, then the nodes are also stochastically equivalent receivers and vice versa.  
By encoding co-clustering structure, the ScBM more closely aligns with the concept of separately exchangeable arrays (e.g. see \citet{persi} and \citet{choi}).

\subsubsection{Degree correction}

The degree-corrected Stochastic Blockmodel generalizes the Stochastic Blockmodel to allow for nodes in the same block to have highly heterogeneous degrees (\cite{karrer2011stochastic}). 
Theorem \ref{clusterTheorem} below studies a similar generalization of the ScBM.   The Degree-Corrected Stochastic co-Blockmodel (DC-ScBM)
adds two sets of parameters ($\theta^y_i >0, i = 1,...,N_r$ and $\theta^z_j >0, j = 1,...,N_c$) that control the in- and out-degrees for each node. Let ${\bf B}$ be a $k_y \times k_z$ matrix where ${\bf B}_{ab} \geq 0$ for all $a, b$. Then, under the DC-ScBM
\[P(A_{ij} = 1) = \theta^y_i\theta^z_j{\bf B}_{y_iz_j}\]
where $\theta^y_i\theta^z_j{\bf B}_{y_iz_j} \in [0, 1]$.   
Note that parameters $\theta^y_i$ and $\theta^z_j$ are arbitrary to within a multiplicative constant that is absorbed into ${\bf B}$.  
To make it identifiable, we impose the constraint that within each row block, the summation of $\theta^y_i$s is $1$. That is, for each row-block $s$,
\[\sum_i \theta^y_i \textbf{1}(Y_{is} = 1) = 1.\] 
Similarly, for any column-block $t$, we impose 
\[\sum_j \theta^z_j \textbf{1}(Z_{jt} = 1) = 1.\]
Under this constraint, ${\bf B}$ has explicit meaning: ${\bf B_{st}}$ represents the expected number of links from row-block $s$ to column-block $t$.
Under the DC-ScBM, define $\popa \triangleq \E A$.  This matrix can be expressed as a product of the matrices,
\[\popa = \Theta_y Y{\bf B}Z^T \Theta_z,\]
where 
$\Theta_y$ is a diagonal matrix whose $ii$'th element is $\theta^y_i$ and $\Theta_z$ is defined similarly with $\theta^z_j$.

\subsection{Estimating the Stochastic co-Blockmodel with \textsc{di-sim}} \label{asymptotic} 
 
Theorem \ref{clusterTheorem} bounds the number of nodes that \textsc{di-sim} ``misclusters".  This demonstrates that the co-clusters from \textsc{di-sim} estimate both the row- and column-block memberships, one in matrix $Y$ and the other in matrix $Z$, corresponding to the two types of stochastic equivalence.    This implies that the two notions of stochastic equivalence relate to the two sets of singular vectors of $L$.

In a diverse set of large empirical networks, the optimal clusters, as judged by a wide variety of graph cut objective functions, are not very large (\cite{leskovec2008statistical}).  To account for this, the results below limit the growth of community sizes by allowing the number of communities to grow with the number of nodes.  Previously, \citet{rohe, choi2010stochastic, hdsbm}, and \cite{sharmo} have also studied this high dimensional setting for the undirected Stochastic Blockmodel. 

Several previous papers have explored the use of spectral tools to aid the estimation of the Stochastic Blockmodel, including \citet{mcsherry2001spectral, dasgupta2004spectral, coja2009finding, ames2010convex, rohe, sussman2011consistent, chaudhuri2012spectral, joseph2013impact, qin, sarkar2013role, krzakala2013spectral, jinScore}; and \citet{ lei2014consistency}.  
The results below build on this previous literature in several ways.  
Theorem \ref{clusterTheorem} gives the first statistical estimation results for directed graphs or bipartite graphs with general degree distributions. Because we study a graph that is directed,  \textsc{di-sim} uses the leading singular vectors of a sparse and asymmetric matrix.  As such, the proof required novel extensions of previous proof techniques.  These techniques allow the results to also hold for bipartite graphs; previous results for bipartite graphs have only studied computationally intractable techniques, e.g. \cite{flynn2012consistent, choi}.  For directed graphs and particularly for bipartite graphs, it is not necessarily true that the number of sending clusters should equal the number of receiving clusters.  Theorem \ref{clusterTheorem} below does not presume that the number of sending clusters equals the number of receiving clusters; the theoretical results highlight the statistical price that is paid when they are not equal.  Finally, we study a sparse degree corrected model  and the theoretical results highlight the importance of the regularization and projection steps in \textsc{di-sim}.   

Previous theoretical papers that use the non-regularized graph Laplacian all require that the minimum degree grows with the number of nodes (e.g. \citet{rohe, sarkar2013role, lei2014consistency}).  However, in many empirical networks, most nodes have 1, 2, or 3 edges.  In these settings, the non-regularized graph Laplacian often has highly localized eigenvectors that are uninformative for estimating large partitions in the graph. 
Because \textsc{di-sim} uses a \textit{regularized} graph Laplacian, the concentration of the singular vectors does not require a growing minimum node degree.  Several previous papers have realized the benefits of regularizing the graph Laplacian (e.g. \cite{page1999pagerank, andersen2006local, amini2013pseudo, chaudhuri2012spectral, qin, joseph2013impact}).
While the regularized singular vectors concentrate without a growing minimum degree, the weakly connected nodes effect the conclusions through their statistical leverage scores.  From the perspective of numerical linear algebra, the leverage scores and the localization of the singular vectors are essential to controlling the algorithmic difficulty of computing the singular vectors \citep{mahoney2011randomized}.

\subsubsection{Population notation}

Recall that  $\popa = \E(A)$ is the population version of the adjacency matrix $A$.  Under the Degree-Corrected Stochastic co-Blockmodel,
\[\popa = \Theta_y Y {\bf B} Z^T \Theta_z,\]
Similar to Equation (\ref{Ldef}), define regularized population versions of  $O$, $P$, and $L$ as
\begin{equation}
	\label{mainpopdef}
    \begin{array}{lll}
\popo_{jj} = \sum_k \popa_{kj}  \B \\ 
\popp_{ii} = \sum_k \popa_{ik}  \B \\ 
\popo_\tau = \popo+\tau I, \quad\quad \popp_\tau = \popp+\tau I \B \\
\popl  =  \popo_\tau^{-\frac{1}{2}} \popa \popp_\tau^{-\frac{1}{2}} \B
\end{array}
\end{equation}
where $\popo$ and $\popp$ are diagonal matrices.
The population graph Laplacian $\popl$ has an alternative expression in terms of $Y$ and $Z$.



\begin{lemma} \label{lemPopL}(Explicit form for $\popl_\tau$) Under the DC-ScBM with parameters $\{{\bf B}, Y, Z, \Theta_Y, \Theta_Z\}$, define $\Theta_{Y,\tau}\in \R^{N_r \times N_r}(\Theta_{Z,\tau}\in \R^{N_c \times N_c})$ to be diagonal matrix where
 \[[\Theta_{Y,\tau}]_{ii} = \theta^Y_i\frac{\popo_{ii}}{\popo_{ii} + \tau} \quad\quad [\Theta_{Z,\tau}]_{jj} = \theta^Z_j\frac{\popp_{jj}}{\popp_{jj} + \tau}.\]
Then $\popl$ has the following form,
\begin{align*}
 \popl = \popo_\tau^{-\frac{1}{2}}\popa \popp_\tau^{-\frac{1}{2}} = \Theta_{Y,\tau}^{\frac{1}{2}}YB_LZ^T\Theta_{Z,\tau}^{\frac{1}{2}},
\end{align*}
for some matrix $B_L \in \R^{k_y \times k_z}$ that is defined in the proof. 
\end{lemma}
The proof of Lemma \ref{lemPopL} is in Section \ref{populationalg}, in the supplementary materials.

\subsubsection{Definition of misclustered}

Rigorous discussions of clustering require careful attention to  identifiability. In the ScBM, the \textit{order} of the columns of $Y$ and $Z$ are unidentifiable.  This leads to difficulty in defining ``misclustered".  Theorem \ref{clusterTheorem} uses the following definition of misclustered that is extended from \citet{rohe}.

By the singular value decomposition, there exist orthonormal matrices $\popx_L\in \R^{N_r \times k_y}$ and $ \popx_R \in \R^{N_c \times k_y}$ and diagonal matrix $\Lambda \in \R^{k_y \times k_y}$ such that 
\[\popl = \popx_L\Lambda \popx_R^T.\]
Define $\popx^*_L$ and $ \popx^*_R$ as the row normalized population singular vectors,
\[[\popx^*_L]_i = \frac{[\popx_L]_i}{||[\popx_L]_i||_2}, \quad [\popx^*_R]_j = \frac{[\popx_R]_j}{||[\popx_R]_j||_2}.\]
Unless stated otherwise, we will presume without loss of generality that $k_y \le k_z$. If rank$(B) = k_y$, then there exist matrices $\mu^y \in \R^{k_y \times k_y}$  and $\mu^z \in \R^{k_z \times k_y}$ such that  $Y \mu^y = \popx^*_L$ and $Z \mu^z = \popx^*_R$ (implied by Lemma \ref{lemsvd} in the supplementary materials).  Moreover, the rows of $\mu^y$ are distinct; with a slightly stronger assumption, the rows of $\mu^z$ are also distinct.  As such,  k-means applied to the rows of $\popx_L^*$ will reveal the partition in $Y$.  Similarly for  $\mu^z$, $\popx_R^*$, and $Z$.  As such,  \textsc{di-sim} applied to the population Laplacian, $\popl$, can discover the block structure in the matrices $Y$ and $Z$.

Let $X_L \in \R^{N_r \times k_y}$ be a matrix whose orthonormal columns are the right singular vectors corresponding to the largest $k_y$ singular values of $L$. \textsc{di-sim} applies $k$-means (with $k_y$ clusters) to the rows of $X^*_L$, denoted as $u_1, \dots, u_{N_r}$. Each row is assigned to one cluster and each cluster has a centroid.

\begin{definition} \label{centroid}
For $i =1, \dots, N_r$, define $c_i^L \in \R^{k_y}$ to be the centroid corresponding to $u_i$ after running $(1+\alpha)$-approximate k-means on $u_1, \dots, u_{N_r}$ with $k_y$ clusters.
\end{definition}

If $c_i^L$ is closer to some population centroid other than its own, i.e. $y_j \mu^y$ for some $y_j \ne y_i$, then we call node $i$ $Y$-misclustered.  This definition must be slightly complicated by the fact that the coordinates in $X_L$ must first align with the coordinates in $\popx_L$.  So, the definitions below include an additional rotation matrix $\rot_L$.

\begin{definition} 
The set of nodes $Y$-misclustered is
\begin{equation} \label{mydef}
\mathscr{M}_y = \left\{i : \|c_i^L- y_i \mu^y\rot_L\|_2 > \|c_i^L - y_j \mu^y\rot_L\|_2 \ \mbox{ for any } \ y_j \ne y_i\right\},
\end{equation}
where $\rot_L$ is the orthonormal matrix that solves Wahba's problem $ \min \|X_L - \popx_L\rot_L\|_F$, i.e. it is the procrustean transformation.
\end{definition}
Defining $Z$-misclustered, requires defining $c_i^R$ and  $\mu^z$ analogous to the previous definitions.
\begin{definition} 
The set of nodes $Z$-misclustered is
\begin{equation} \label{mzdef}
\mathscr{M}_z = \left\{i : \|c_i^R- z_i \mu^z\rot_R\|_2 > \|c_i^R - z_j \mu^z\rot_R\|_2 \ \mbox{ for any } \ z_j \ne z_i\right\},
\end{equation}
where $\rot_R$ is the orthonormal matrix that solves Wahba's problem $\min \|X_R - \popx_R\rot_R\|_F$, i.e. it is the procrustean transformation.
\end{definition}

\subsubsection{Asymptotic performance} \label{mainresult}

Define 
\[H = (Y^T\Theta_{Y,\tau} Y)^{1/2}B_L(Z^T\Theta_{Z,\tau}Z)^{1/2}.\] 
$H \in \R^{k_y \times k_z}$ shares same top $K$ singular values with the population graph Laplacian $\popl$. Define $H_{\cdot j}$ as the $j$th column of $H$, and define
\begin{equation} \label{maingammadef}
\gamma_z = \min_{i \ne j} \|H_{\cdot i} - H_{\cdot j}\|_2.
\end{equation}
When $k_z>k_y$, $\gamma_z$ controls the additional difficulty in estimating $Z$.  

Define $m_y$ as the minimum row length of $\popx_L$. Similarly define $m_z$ as the minimum row length of $\popx_R$. 
That is, 
\begin{equation} \label{leveragedef}
m_y = \min_{i = 1,..,N_r} ||[\popx_L]_i||_2, \quad\quad m_z = \min_{j = 1,..,N_c} ||[\popx_R]_j||_2.
\end{equation}
These are the minimum leverage scores for the matrices $\popl \popl^T$ and $\popl^T \popl$. 

The next theorem bounds the sizes of the sets of misclustered nodes,  $|\mathscr{M}_y|$ and $|\mathscr{M}_z|$.  

\begin{theorem}\label{clusterTheorem}
Suppose $A \in \R^{N_r \times N_c}$ is an adjacency matrix sampled from the Degree-Corrected Stochastic co-Blockmodel with $k_y$ left blocks and $k_x$ right blocks.   Let $K = \min \{k_y, k_z\} = k_y$.  Define $\popl$ as in Equation \ref{mainpopdef}. Define $\lambda_1 \ge \lambda_2 \ge \dots \ge \lambda_{K} >0$ as the $K$ nonzero singular values of $\popl$. 
Let $\mathscr{M}_y$ and $\mathscr{M}_z$  be the sets of $Y$- and $Z$-misclustered nodes (Equations \ref{mydef} and \ref{mzdef}) by DI-SIM.  
Let $\delta$ be the minimum expected row and column degree of $A$, that is $\delta = \min(\min_i \popo_{ii},  \min_j \popp_{jj})$.
Define $\gamma_z$, $m_y$ and $m_z$ as in Equations \ref{maingammadef} and \ref{leveragedef}.
For any $\epsilon > 0$, if $\delta + \tau >3\ln(N_r + N_c) + 3\ln (4/\epsilon)$, then with probability at least  $1- \epsilon$,
\begin{equation}\label{ybound}
 \frac{\mathscr{M}_y}{N_r} \le c_0(\alpha)\frac{K\ln(4(N_r + N_c)/\epsilon)}{N_r\lambda_K^2m_y^2(\delta + \tau)},
 \end{equation}
\begin{equation} \label{zbound}
 \frac{\mathscr{M}_z}{N_c} \le c_1(\alpha)\frac{K\ln(4(N_r + N_c)/\epsilon)}{N_c\lambda_K^2m_z^2\gamma_z^2(\delta + \tau)}.
\end{equation}
\end{theorem}

A proof of Theorem \ref{clusterTheorem} is contained in the appendix.

Because $\|\popx_L\|_F^2 = K$, the average leverage score $||[\popx_L]_i||_2$ is $\sqrt{K/N_r}$.  If the $m_y$ is of the same order, with $\lambda_K$ and $K$ fixed, then $ \frac{\mathscr{M}_y}{N_r}$ goes to zero when $\delta + \tau$ grows faster than $\ln(N_r + N_c)$.  In sparse graphs, $\delta$ is fixed and so $\tau$ must grow with $n$.  To ensure that $\lambda_K$ remains fixed while $\tau$ is growing, it is necessary for the \textit{average} degree to also grow.  

In many empirical networks, the vast majority of nodes have very small degrees; this is a regime in which $\delta$ is not growing.  In such networks, the bounds in Equations \eqref{ybound} and \eqref{zbound} are vacuous unless $\tau >0$.  
While these equations are upper bounds, the simulations in the appendix show that for sparse networks (i.e. $\delta$ small), these bounds align with the performance of \textsc{di-sim}.  Moreover, the performance of \textsc{di-sim} is drastically improves with statistical regularization.

These results highlight the sensitivity to the smallest leverage scores $m_y$ and $m_z$. When there are excessively small leverage scores, then the bound above can become meaningless.  However, a slight modification of \textsc{di-sim} that excludes the low leveraged points from the k-means step and the clustering results, obtains a vastly improved bound.  If one computes the leading singular vectors and only runs k-means on the  with the observations $i$ that satisfy $||[\popx_L]_i||_2 > \eta \sqrt{K/N}$, then the theoretical results are much improved.  Denote the nodes misclustered by this procedure as $\mathscr{M}_y^*$.  Let there be $N^*$ nodes with $||[\popx_L]_i||_2 > \eta \sqrt{K/N}$.  If $N/N^* = O(1)$  and the population eigengap $\lambda_K$ is not asymptotically diminishing, then 
\[ \frac{\mathscr{M}_y^*}{N^*} \le c_2(\alpha)\frac{\ln((N_r + N_c)/\epsilon)}{\eta^2(\delta + \tau)}.\]
The proof mimics the proof of Theorem \ref{clusterTheorem}.

In Theorem \ref{clusterTheorem}, the bound for  $\mathscr{M}_z$ exceeds the bound for $\mathscr{M}_y$ because the bound for $\mathscr{M}_z$ contains an additional term $\gamma_z$.  This asymmetry stems from allowing $k_z\ge k_y$.  In fact, if $k_y = k_z$, then $\gamma_z$ can be removed, making the bounds identical.
However, if $k_z > k_y$, then Rank$(\popl)$ is at most $k_y$.  So, the singular value decomposition  represents the data in $k_y$ dimensions and the k-means steps for both the left and the right clusters are done in $k_y$ dimensions.  In estimating $Y$, there is one dimension in the singular vector representation for each of the $k_y$ blocks.  At the same time, the singular value representation shoehorns the $k_z$ blocks in $Z$ into less than $k_z$ dimensions.  So, there is less space to separate each of the $k_z$ clusters, obscuring the estimation of $Z$.  

To further understand the bound in Theorem \ref{clusterTheorem},  define the following toy model. 

\begin{definition} \label{fourpara}
The \textbf{four parameter ScBM} is an ScBM parameterized by $K \in \mathbb{N},s \in \mathbb{N},r \in (0,1),$ and $p \in (0,1)$ such that $p+r \le 1$.  The matrices $Y,Z \in \{0,1\}^{n\times K}$ each contain $s$ ones in each column and $B = p I_K + r \textbf{1}_K \textbf{1}_K^T$.  
\end{definition}
In the four parameter ScBM, there are $K$ left- and right-blocks each with $s$ nodes and the node partitions in $Y$ and $Z$ are not necessarily related.  If $y_i = z_j$, then $P(i \rightarrow j) = p + r$.  Otherwise, $P(i \rightarrow j) = r$.

\begin{corollary} \label{fourcor}
Assume the four parameter ScBM, with same number of rows and columns, and $r$, $p$ fixed and $K$ growing with $N = Ks$.   Since $\delta$ is growing with $n$, set $\tau=0$. Then,
\[\lambda_K =   \frac{1}{K (r/p) + 1},\]
where $\lambda_K$ is the $K$th largest singular value of $\popl$. Moreover,
\[N^{-1}(|\mathscr{M}_y| + |\mathscr{M}_z|) = O_p\left(\frac{K^2 \log N}{N}\right).\]
The proportion of nodes that are misclustered converges to zero,  as long as number of clusters $K = o(\sqrt{ N/ \log N})$. 
\end{corollary}
The proof of Corollary \ref{fourcor} is contained in the supplementary materials, Section \ref{clusterAppendix}.

\section{Discussion} \label{conclusion}

\subsection{Related SVD methods} \label{sec:svd}

Several other researchers have used SVD to explore and understand different network features.  


\citet{kleinberg1999authoritative} proposed the concept of ``hubs and authorities" for hyperlink-induced topic search (HITS).  This algorithm that was a precursor to Google's PageRank algorithm (\citet{page1999pagerank}). The SVD plays a key role in this algorithm.
The SVD also played a key role in \citet{hoff2009multiplicative}, where the left and right singular vectors estimate ``sender-specific and receiver-specific latent nodal attributes".  Like \textsc{di-sim}, the algorithms in \citet{kleinberg1999authoritative} and \cite{hoff2009multiplicative} use the SVD to investigate asymmetric features of directed graphs.

\citet{dhillon2001co} suggested an algorithm similar to \textsc{di-sim} that was to be applied to bipartite graphs in which the rows and columns of $L$ correspond to different entities (e.g. documents and words).  There are three key differences between \textsc{di-sim} and the algorithm in \citet{dhillon2001co}.  First, \citet{dhillon2001co} does not use regularization.  So, the definition of $L$ remains the same, but $\tau =0$.  The regularization step helps \textsc{di-sim} when $L$ has highly localized singular vectors; this often happens when several nodes have very small degrees.  Second, \citet{dhillon2001co} does not project the rows of the singular vectors onto the sphere. The project step helps \textsc{di-sim} when the node degrees are highly heterogeneous.  Finally, to estimate $K$ clusters, \citet{dhillon2001co} only uses $\lceil \log_2 K \rceil$ singular vectors ($\lceil x \rceil$ is the smallest integer greater than $x$).   While it is much faster to only compute $\log_2 K$ singular vectors, there is additional information contained in the remaining top $K$ singular vectors.  For example, under the four parameter ScBM, $\lambda_2 = \dots = \lambda_K$.  As such, there is not an eigengap after the  $\lceil \log_2 K \rceil$th singular value.

SVD has been used in other forms of discrete data, most notably in correspondence analysis (CA).  In fact, \textsc{di-sim} normalizes the rows and columns in an identical fashion to CA.   CA has similarities to principal components analysis, but it is applicable to categorical data in contingency tables and is built on a beautiful set of algebraic ideas (\cite{holmes2006multivariate}). The methodology was first published in \citet{hirschfeld1935connection} and (like spectral clustering) it has been rediscovered and reapplied several times over (\cite{guttman1959metricizing}).   
While there exists a deep algorithmic, algebraic, and heuristic understanding of CA, it is rarely conceived through a statistical model; \citet{Goodman1986} is one exception.  \citet{wasserman1990correspondence} study how one could use CA to study relational data, but was particularly interested in two-way or bipartite networks. \citet{anderson1992building} mentions CA and visual inspection as one possible way to construct blocks in a Stochastic Blockmodel.  The previous CA literature has not explored the parameter estimation performance of CA under any of these models, nor has the literature explored the dual partitions under a directed graph.  Algorithmically, the CA literature does not employ the regularization step (using $\tau$) for sparse data.  Nor does it employ the projection step, where the rows of the singular vector matrices are normalized to have unit length.  This is a potentially fruitful area for further research in CA. 

In research that was  contemporaneous to this paper's tech report (\cite{rohe2012co}),  both \citet{choi} and \citet{flynn2012consistent}  studied likelihood formulations of co-clustering  in the network setting.  \citet{choi} studied a ``non-parametric" model that assumes the nodes are separately exchangeable.  This is a generalization of the Stochastic co-Blockmodel.  \citet{flynn2012consistent} uses a profile likelihood formulation to develop a consistent estimator of the Stochastic co-Blockmodel.

\subsection{Conclusion}

By extending both spectral clustering and the Stochastic Blockmodel to a co-clustering framework, this paper aims to better conceptualize clustering in directed graphs; co-clustering is a meaningful procedure for directed networks and  helps to guide the development of reasonable questions for network researchers.  

Given that empirical graphs can be sparse, with highly heterogeneous node degrees, we propose a novel spectral algorithm \textsc{di-sim} that incorporates both the regularization and projection steps.  
Section \ref{Examples} demonstrates how \textsc{di-sim}'s asymmetric analysis finds novel structure in three empirical networks.  In the Enron email network, it identifies Bill Williams, who was part of the conspiracy to manufacture energy shortages in Southern California.  In the political blog network, it identifies six asymmetric blogs.    Finally, in the \textit{c elegans} network \textsc{di-sim} identifies several neurons that form feedforward circuits.  In each of these examples, the conclusions are only feasible because the data analysis leverages the edge asymmetries.  

Investigating the statistical properties of \textsc{di-sim} required several theoretical novelties that build on the extensive literature for spectral algorithms. The results highlight the importance of regularization and the statistical leverage scores. Importantly, because of the regularization, the convergence of the singular vectors does not require a growing minimum degree.  Moreover, because the theory accommodates a ``degree corrected" model, it was necessary to project the rows of $X_L$ and $X_R$ onto the sphere.  Finally, these results extend to bipartite graphs, where the rows and columns of the adjacency matrix index different sets of objects.

\textbf{Acknowledgements:} Thank you David Gleich for your thoughtful questions and helpful references.  Thank you Sara Fernandes-Taylor and Zoe Russek for your helpful comments. Thank you Susan Holmes for the helpful references.  While Karl Rohe was a graduate student, he was partially supported by an NSF VIGRE Graduate Fellowship at UC Berkeley and ARO grant W911NF-11-1-0114.  More recently, NSF DMS-1309998 has supported this research. Tai Qin is supported by DMS-1308877.  Bin Yu is partially supported by NSF grants SES-0835531 (CDI), DMS-1107000, 0939370 CCF, and ARO grant W911NF-11-1-0114.

\bibliographystyle{natbib}
\bibliography{references}

\newpage
\appendix

\begin{center}
\begin{LARGE}
\textbf{Supplementary materials}
\end{LARGE}
\end{center}

\section{Simulation}\label{sims}

The theoretical results of Theorem \ref{clusterTheorem} identify (1) the expected node degree and (2) the spectral gap as essential parameters that control the clustering performance of \textsc{di-sim}.
The simulations investigate \textsc{di-sim}'s non-asymptotic sensitivity to these quantities under the four parameter Stochastic Co-Blockmodel (Definition \ref{fourpara}).   Moreover, the simulations investigate the performance under the model without degree correction and with degree correction.   

Both simulations use $k = 5$ blocks for both $Y$ and $Z$.  Each of the five blocks contains $400$ nodes.  So, $n = 2000$.  When the model is degree corrected, $\theta_1, \dots, \theta_n$ are iid with $\theta_i \stackrel{ d}{=} \sqrt{Z + .169}$ where $Z \sim$ exponential(1). The addition of $.169$ ensures that $\E(\theta_i) \approx 1$ and thus the expected degrees are unchanged between the degree corrected model and the model without degree correction.

In the first simulation, the expected node degree is represented on the horizontal axis; the out of block probability $r$ and the in block probability $p+r$ change in a way that keeps the spectral gap of $\popl$ fixed across the horizontal axis. 
In the second simulation, the spectral gap is represented on the horizontal axis; the probabilities $p$ and $r$ change so that  the expected degree $pk + rn$ remains fixed at twenty. In both simulations, the partition matrices $Y$ and $Z$ are sampled independently and uniformly over the set of matrices with $s=400$ and $k=5$. 

To design the parameter settings of $p$ and $r$, 
note that the population graph Laplacian  $\popl$ is a rank $k$ matrix.  So, its $k+1$ eigenvalue is $\lambda_{k+1} = 0$ and the spectral gap is $\lambda_k - \lambda_{k+1}  = \lambda_k$.
Corollary \ref{fourcor} says that the $k$th eigenvalue of $\popl$ for $\tau =0$ is 
\[\lambda_k =   \frac{1}{k (r/p) + 1}.\]
To keep the spectral gap $\lambda_k$ fixed, it is equivalent to keeping $r/p$ fixed.

We use the $k$-means++ algorithm (\citet{kumar2004simple}, \citet{kmpp}) with ten initializations.  Only the results for $Y$-misclustered (Definition \ref{mydef}) are reported.  Code is provided at \texttt{http://www.stat.wisc.edu/$\sim$karlrohe/}.

\subsubsection{Simulation 1}
This simulation investigates the sensitivity of \textsc{di-sim}  to a diminishing number of edges.  
Figure \ref{degfig} displays the simulation results for a sequence of nine equally spaced values of the expected degree between $5$ and $16$.   To decrease the variability of the plot, each simulation was run twenty times;  only the average is displayed. The solid line corresponds to setting the regularization parameter equal to zero ($\tau = 0$).  The line with longer dashes represents $\tau = 1$.  The line with small dashes represents the average degree, $\tau = \tfrac{1}{n} \sum_i P_{ii} $.

\begin{figure}[h!]
\begin{center}
\scalebox{.99}{
\includegraphics[width = 6in]{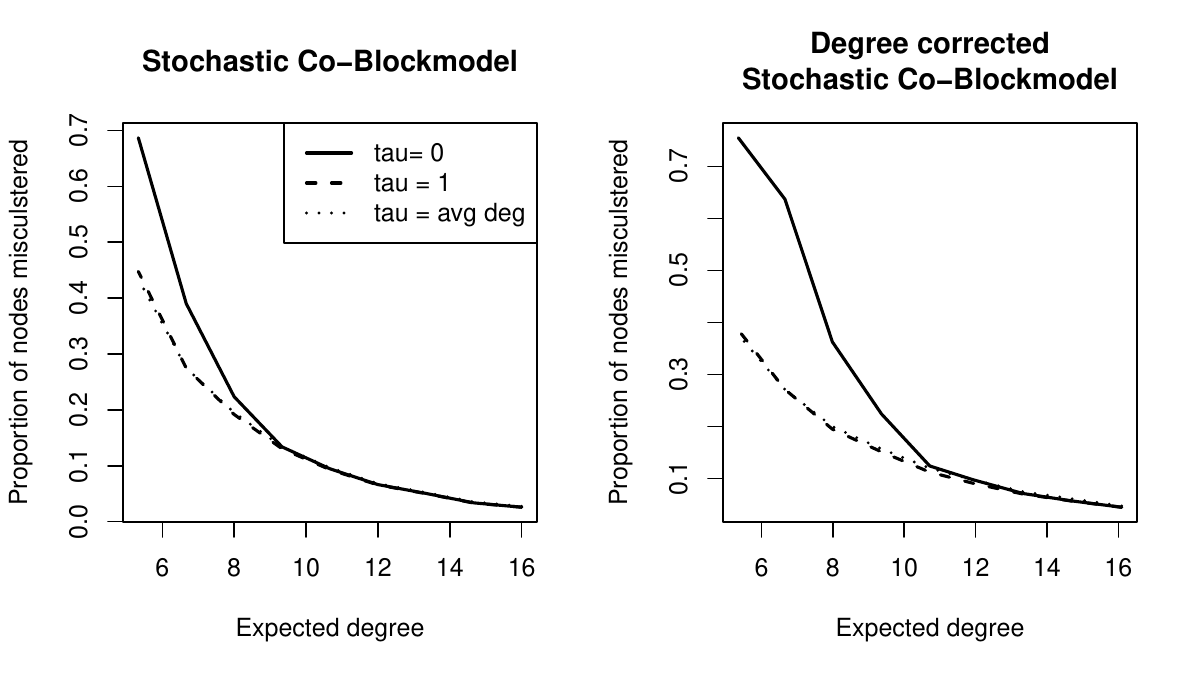}
}
 \vspace{-5 mm}
 \caption{
In the simulation on the left, the data comes from the four parameter Stochastic Co-Blockmodel.  On the right, the data comes from the same model, but with degree correction.  The $\theta_i$ parameters have expectation one.  In both models, $k=5$ and $s=400$.  The probabilities $p$ and $r$ vary such that $p = 5r$, keeping the spectral gap fixed at  $\lambda_k = 1/2$.  This simulation shows that for small expected degree, regularization decreases the proportion of nodes that are misclustered. Moreover, the benefits of regularization are more pronounced under the degree corrected model.
 } \label{degfig}
\end{center}
\end{figure}

Figure \ref{degfig} demonstrates two things.  First, the number of misclustered nodes increases as the expected degree goes to zero.  Second, regularization decreases the number of misclustered nodes for small values of the expected degree.

\subsubsection{Simulation 2}

This simulation investigates the sensitivity of \textsc{di-sim}  to a diminishing spectral gap $\lambda_k$.  
Figure \ref{degfig} displays the simulation results for a sequence of nine equally spaced values of the spectral gap, between $.3$ and $.6$.  In each simulation, the expected degree is held constant at twenty.  To decrease the variability, each simulation was run twenty times;  only the average is displayed. The solid line corresponds to setting the regularization parameter equal to zero ($\tau = 0$).  The line with longer dashes represents $\tau = 1$.  The line with small dashes represents the average degree, $\tau = \tfrac{1}{n} \sum_i P_{ii} $.  

\begin{figure}[h!]
\begin{center}
\scalebox{.99}{
\includegraphics[width = 6in]{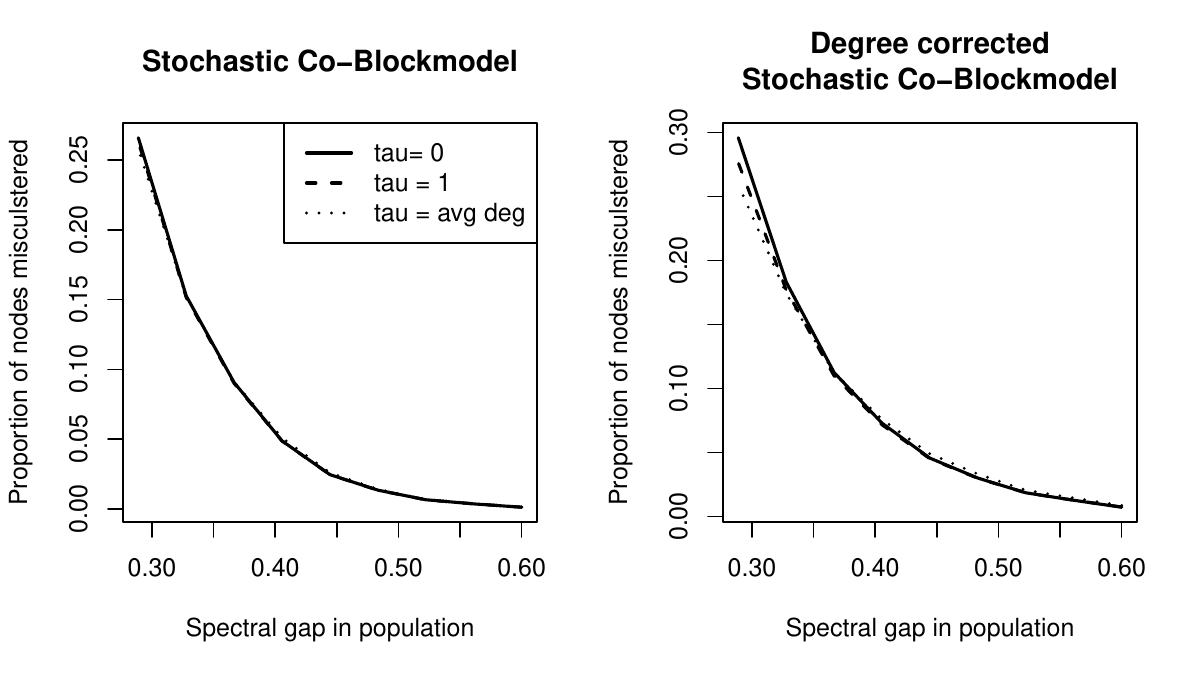}
}
 \vspace{-5 mm}
 \caption{
In the simulation on the left, the data comes from the four parameter Stochastic Co-Blockmodel.  On the right, the data comes from the same model, but with degree correction.  The $\theta_i$ parameters have expectation one.  In both models, $k=5$ and $s=400$.  
The spectral gap, displayed on the horizontal axis, changes because the probabilities $p$ and $r$ change.  The values of $p$ and $r$ vary in a way that keeps the expected degree fixed at twenty for all simulations.  Without degree correction, the three separate lines are difficult to distinguish because they are nearly identical.  Under the degree corrected model, regularization improves performance when the spectral gap is small. 
%
%
 } \label{sigfig}
\end{center}
\end{figure}

Figure \ref{degfig} demonstrates two things.  First, the number of misclustered nodes increases as the spectral gap goes to zero.  Second, regularization yields slight benefits when the spectral gap is small and the model is degree corrected.

\section{Directed latent space model}

The following definition of the directed latent space model is motivated by the Aldous-Hoover representation for infinite exchangeable arrays and the latent space model proposed by \citet{hoff2002latent}.  It specifies the distribution of the random directed adjacency matrix $A \in \{0,1\}^{n \times n}$. 

\begin{definition} 
The random adjacency matrix $A$ is from the \textbf{directed latent space model} if and only if 
\[\bP(A| \{z_i, y_i\}_{i=1}^n) = \prod_{i < j}\bP(A_{ij} |y_i, z_j)\]
where $\{z_i, y_i\}_{i=1}^n \subset \R^k \times \R^k$ are pairs of random vectors that are independent across $i =1, \dots, n$. 
\end{definition}
In this definition, $\bP(A_{ij} |y_i, z_j)$ is the probability mass function of $A_{ij}$ conditioned on $y_i$ and $z_j$.  Define $Y \in \R^{n \times k}$ such that its $i$th row is  $y_i$ for all $i\in V$.  Similarly, define $Z \in \R^{n\times k}$ such that its $i$th row is $z_i$. \textbf{Throughout this paper we condition on $Y$ and $Z$}. Because $\bP(A_{ij} = 1|Y,Z) = \E(A_{ij}|Y,Z)$, the model is then completely parametrized by the matrix 
\[\popa = \E(A|Y,Z) \in \R^{n \times n},\]
where $\popa$ depends on $Y$ and $Z$, but this is dropped for notational convenience. 

The Stochastic Blockmodel,  introduced by \citet{holland1983stochastic}, is a specific latent space model with well defined communities.  The following definition extends the Stochastic Blockmodel to allow for the asymmetric communities discussed in the previous section. 

\begin{definition}
The \textbf{Stochastic co-Blockmodel} with $k$ blocks is a directed latent space model with 
\[    \popa = Y B Z^T, \]
where $Y, Z \in \{0,1\}^{n \times k}$ both have exactly one 1 in each row and at least one 1 in each column and $B \in [0,1]^{k \times k}$ is full rank. 
\end{definition}

\section{Convergence of Singular Vectors}

The classical spectral clustering algorithm above can be divided into two steps: (1) find the eigendecomposition of $L$ and (2) run $k$-means.  Several previous papers have studied the estimation performance of the classical spectral clustering algorithm under a standard social network model. However, due to the asymmetry of $A$, previous proof techniques can not be directly applied to study the singular vectors for \textsc{di-sim}. In this analysis, we (a) symmetrize the graph Laplacian, (b) apply modern matrix concentration techniques to this symmetrized version of the graph Laplacian, and (c) apply an updated version of the Davis-Kahn theorem to bound the distance between the singular spaces of the empirical and population Laplacian.

For simplicity, from now on let $L$ denote the regularized graph Laplacian.

Define the symmetrized version of $L$ and $\popl$ as 
\[ \tilde{L} = \left( \begin{array}{cc}
   0 & L \\
   L^T & 0 \end{array} \right), \quad\quad \tilde{\popl} = \left( \begin{array}{ccc}
   0 & \popl \\
   \popl^T & 0 \end{array} \right).\] 
The next theorem gives a sharp bound between $\tilde L$ and $\tilde\popl$.

\begin{theorem} \label{Concentration}
(Concentration of $L$) Let $G$ be a random graph, with independent edges and $pr(v_i \sim v_j) = p_{ij}$. Let $\delta$ be the minimum expected row and column degree of $G$, that is $\delta = \min(\min_i \popo_{ii},  \min_j \popp_{jj})$.
For any $\epsilon > 0$, if $\delta + \tau >3\ln(N_r+N_c) + 3\ln (4/\epsilon)$, then with probability at least  $1- \epsilon$,
 \begin{equation}
 \|\tilde L - \tilde\popl\| \leq 4 \sqrt{\frac{3\ln(4(N_r+N_c)/\epsilon)}{\delta+\tau}}.
\end{equation}
\end{theorem}

 \begin{proof} Let $C =  \popp_{\tau}^{-\frac{1}{2}}A \popo_{\tau}^{-\frac{1}{2}}$ and define $\tilde C$ in the same way as $\tilde L$.
 Then $\|\tilde L - \tilde\popl\| \le \|\tilde C - \tilde\popl\| + \|\tilde L - \tilde C\|$. We bound the two terms separately. 
 
For the first term, we apply the following concentration inequality for matrices, see for example \citet{chung2011spectra}.
  \begin{lemma} \label{lem:61}
Let $X_1, X_2,..., X_m$ be independent random $N \times N$ Hermitian matrices. Moreover, assume that $\|X_i - \E(X_i)\| \le M$ for all $i$, and $v^2  = \|\sum var(X_i)\|$. Let $X = \sum X_i$. Then for any $a > 0$,
\[pr(\|X - \E(X)\| \ge a) \le 2N \exp \bigg(-\frac{a^2}{2v^2 + 2Ma/3}  \bigg).\]
\end{lemma}

Let $E^{ij}$ be the matrix with 1 in the $i,j$ and $j,i$ positions and 0 everywhere else. Let $p_{ij} = \popa_{ij}$. To use this inequality, express $\tilde C - \tilde\popl$ as the sum of the matrices $Y_{i,m+j}$,
\[Y_{i,m+j} = \frac{1}{\sqrt{(\popo_{ii}+{\tau})(\popp_{jj}+{\tau})}}(A_{ij}-p_{ij})E^{i, m+j}, i = 1,...,m, j = 1,...,n.\]
Note that 
\[\|\tilde C -\tilde \popl\| = \|\sum_{i = 1}^m\sum_{j = 1}^n Y_{i,m+j}\|,\]
and
 \[\|Y_{i,m+j}\| \leq \frac{1}{\sqrt{(\popo_{ii}+{\tau})(\popp_{jj}+{\tau})}} \leq (\delta+\tau)^{-1}.\]
Moreover,
\[\E [Y_{i,m+j}] = 0 \quad and \quad \E [Y_{i,m+j}^2] = \frac{1}{(\popo_{ii}+{\tau})(\popp_{jj}+{\tau})}(p_{ij} - p_{ij}^2)(E^{ii}+E^{m+j,m+j}).\]
Then,
\begin{align*}
v^2 &= \|\sum_{i = 1}^m\sum_{j = 1}^n \E [Y_{i,m+j}^2]\|  = \|\sum_{i = 1}^m\sum_{j = 1}^n \frac{1}{(\popo_{ii}+{\tau})(\popp_{jj}+{\tau})} (p_{ij} - p_{ij}^2)(E^{ii}+E^{m+j,m+j})\|\\
&= \|\sum_{i = 1}^m[\sum_{j = 1}^n \frac{1}{(\popo_{ii}+{\tau})(\popp_{jj}+{\tau})} (p_{ij} - p_{ij}^2)]E^{ii} + \sum_{j = 1}^n[\sum_{i = 1}^m \frac{1}{(\popo_{ii}+{\tau})(\popp_{jj}+{\tau})}(p_{ij} - p_{ij}^2)]E^{m+j,m+j}\|\\
&= \max \bigg\{ \max_{i = 1,...,m} (\sum_{j = 1}^n\frac{1}{(\popo_{ii}+{\tau})(\popp_{jj}+{\tau})} (p_{ij} - p_{ij}^2)), \max_{j = 1,...,n} (\sum_{i=1}^m\frac{1}{(\popo_{ii}+{\tau})(\popp_{jj}+{\tau})} (p_{ij} - p_{ij}^2))\bigg\}\\
&\leq \max \bigg\{ \max_{i = 1,...,m} \frac{1}{\delta + \tau}\sum_{j = 1}^n\frac{p_{ij}}{\popo_{ii}+ \tau}, \max_{j = 1,...,n} \frac{1}{\delta+\tau} \sum_{i = 1}^m\frac{p_{ij}}{\popp_{jj} + \tau}\bigg\}\\
&= (\delta+\tau)^{-1}.
\end{align*}
Take 
\[a = \sqrt{\frac{3 \ln (4(N_r+N_c)/\epsilon)}{\delta + \tau}}.\]
 By assumption, $\delta + \tau > 3\ln (N_r+N_c) + 3\ln(4/\epsilon)$. So $a < 1$. Applying Lemma~\ref{lem:61},
\begin{align*}
pr(\|\tilde C - \tilde\popl\| \ge a ) &\le 2(N_r+N_c) \exp \bigg(-\frac{\frac{3 \ln(4(N_r+N_c)/\epsilon)}{\delta + \tau}}{2/(\delta + \tau) + 2a/[3(\delta + \tau)]}\bigg)\\
&\le 2N \exp(-\frac{3 \ln (4(N_r+N_c)/\epsilon)}{3})\\
& \le \epsilon/2.
\end{align*}

For the second term $\|\tilde L - \tilde C\|$, define 
\[ D_\tau = \left( \begin{array}{cc}
   O_\tau & 0 \\
   0 & P_\tau \end{array} \right), \quad\quad \popd_\tau = \left( \begin{array}{cc}
   \popo_\tau & 0 \\
   0 & \popp_\tau \end{array} \right), \quad D = D_0, \ \mbox{ and }  \ \popd = \popd_0. \]

Apply the two sided concentration inequality for each $i$, $1\leq i\leq N_r+N_c$, (see for example \citet[chap. 2]{chung2006complex})
\[pr(|D_{ii} - \popd_{ii}| \ge \lambda) \le \exp\{-\frac{\lambda^2}{2\popd_{ii}}\} + \exp\{-\frac{\lambda^2}{2\popd_{ii} + \frac{2}{3}\lambda}\}.\]
Let $\lambda = a(\popd_{ii} + \tau)$, where $a$ is as before.
\begin{align*}
pr\bigg(|D_{ii} - \popd_{ii}| \ge a(\popd_{ii} + \tau)\bigg) &\le \exp\{-\frac{a^2(\popd_{ii} + \tau)^2}{2\popd_{ii}}\} + \exp\{-\frac{a^2(\popd_{ii} + \tau)^2}{2\popd_{ii} + \frac{2}{3}a(\popd_{ii} + \tau)}\}\\
& \le 2\exp\{-\frac{a^2(\popd_{ii} + \tau)^2}{(2+\frac{2}{3}a)(\popd_{ii} + \tau)}\} \\
&\le 2\exp\{-\frac{a^2(\popd_{ii} + \tau)}{3}\} \\
&\le 2\exp\{-\ln(4(N_r+N_c)/\epsilon)\frac{(\popd_{ii} + \tau)}{\delta + \tau}\} \\
&\le 2\exp\{-\ln(4(N_r+N_c)/\epsilon)\} \\
&\le \epsilon/2(N_r+N_c).
\end{align*}
Because
\begin{align*}
\|\popd_\tau^{-\frac{1}{2}} D_\tau^{\frac{1}{2}} - I \| = max_{i} \bigg|\sqrt{\frac{D_{ii}+ \tau}{\popd_{ii} + \tau}} - 1\bigg| \le max_{i} \bigg|\frac{D_{ii}+ \tau}{\popd_{ii} + \tau} - 1\bigg|,
\end{align*}
It follows that
\begin{align*}
pr(\|\popd_\tau^{-\frac{1}{2}}D_\tau^{\frac{1}{2}} - I \| \ge a) &\le pr(max_{i} \bigg|\frac{D_{ii}+ \tau}{\popd_{ii} + \tau} - 1\bigg| \ge a)\\
&\le pr(\cup_i \{|(D_{ii}+\tau) - (\popd_{ii}+\tau)| \ge a(\popd_{ii} + \tau)\}) \\
&\le \epsilon/2.
\end{align*}
Note that $\|\tilde L_\tau\| \le 1$. Therefore, with probability at least $1- \epsilon/2$,
\begin{align*}
\|\tilde L_\tau - C\| &= \|D_\tau^{-\frac{1}{2}}\tilde AD_{\tau}^{-\frac{1}{2}} - \popd_\tau^{-\frac{1}{2}}\tilde A \popd_{\tau}^{-\frac{1}{2}}\| \\
& = \|\tilde L_\tau - \popd_\tau^{-\frac{1}{2}}D_\tau^{\frac{1}{2}}\tilde L_\tau D_\tau^{\frac{1}{2}}\popd_\tau^{-\frac{1}{2}}\| \\
&= \|(I - \popd_\tau^{-\frac{1}{2}}D_\tau^{\frac{1}{2}})\tilde L_\tau D_\tau^{\frac{1}{2}}\popd_\tau^{-\frac{1}{2}} + \tilde L_\tau (I - D_\tau^{\frac{1}{2}}\popd_\tau^{-\frac{1}{2}})\| \\
&\le \|\popd_\tau^{-\frac{1}{2}}D_\tau^{\frac{1}{2}} - I \|\|\popd_\tau^{-\frac{1}{2}}D_\tau^{\frac{1}{2}} \| + \|\popd_\tau^{-\frac{1}{2}}D_\tau^{\frac{1}{2}} - I \| \\
\le a^2 + 2a.
\end{align*}

Combining the two parts yields 
\[ \|\tilde L_\tau - \tilde \popl_\tau\| \le a^2 + 3a \le 4a,\]
with probability at least $1- \epsilon$.
\end{proof}

 The next theorem bounds the difference between the empirical and population singular vectors 
 in terms of the Frobenius norm.
 
 \begin{theorem} \label{dk}{(Concentration of Singular Space)}
 Let A be the adjacency matrix generated from the DC-ScBM with parameters $\{{\bf B}, Y, Z, \Theta_Y, \Theta_Z\}$.
 Let $\lambda_1 \geq \lambda _2 \geq ... \geq \lambda_K >0$ be the positive singular values of $\popl_\tau$.
 
 Let $X_L (X_R)$ and $\popx_L (\popx_R)$ contain the top $K$ left(right) singular vectors of $L_\tau$ and $\popl_\tau$ respectively.  
For any $ \epsilon > 0$ and sufficiently large $N_r$ and $N_r$, 
if $\delta >3\ln(N_r+N_c) + 3\ln (4/\epsilon)$, 
then with probability at least  $1- \epsilon$
\begin{eqnarray}
\|X_L - \popx_L\rot_L\|_F &\leq& \frac{8\sqrt{6}}{\lambda_K} \sqrt{\frac{K\ln(4(N_r+N_c)/\epsilon)}{\delta+\tau}}\\
\mbox{ and } \  \|X_R - \popx_R\rot_R\|_F &\leq& \frac{8\sqrt{6}}{\lambda_K} \sqrt{\frac{K\ln(4(N_r+N_c)/\epsilon)}{\delta+\tau}},
  \end{eqnarray}
  for some orthogonal matrices $\rot_L, \rot_R \in \R^{K \times K}$.
  \end{theorem}

\begin{proof}
Define 
\[\tilde\popx = \frac{1}{\sqrt{2}}\left( \begin{array}{c}
   \popx_L \\
    \popx_R  \end{array} \right).\]    
A simple calculation shows that $\tilde\popx \in \R^{(N_r + N_c)\times K}$ contains the top $K$ eigenvectors of $\tilde L$ corresponding to its top $K$ eigenvalues. 

We apply an improved version of Davis Kahn theorem from \cite{lei2013consistency}. By a slightly modified proof of Lemma 5.1 in \citet{lei2013consistency}, 
it can be shown that 
\[\|\tilde X\tilde X^T - \tilde\popx\tilde\popx^T\|_F \le \frac{\sqrt{2K}}{\lambda_K}\|\tilde L_\tau - \tilde\popl_\tau\|.\]
Combining it with Theorem \ref{Concentration} and its assumptions, 
\[\|\tilde X\tilde X^T - \tilde\popx\tilde\popx^T\|_F \leq \frac{4\sqrt{6}}{\lambda_K} \sqrt{\frac{K\ln(4(N_r+N_c)/\epsilon)}{\delta+\tau}},\]
with probability at lease $1 - \epsilon$.
By definition of $\tilde \popx$ and $\tilde X$,
\begin{align*}
\|\tilde X\tilde X^T - \tilde\popx\tilde\popx^T\|_F &= \left\|\left( \begin{array}{cc}
   \frac{1}{2}(X_LX_L^T - \popx_L\popx_L^T) & \frac{1}{2}(X_LX_R^T - \popx_L\popx_R^T)\\
    \frac{1}{2}(X_RX_L^T - \popx_R\popx_L^T) & \frac{1}{2}(X_RX_R^T - \popx_R\popx_R)  \end{array} \right)\right\|_F \\
   & \ge \frac{1}{2}\|X_LX_L^T - \popx_L\popx_L^T\|_F\\
   & \ge \frac{1}{2}\|X_L - \popx_L\rot_L\|_F.
 \end{align*}
 Similarly $\|\tilde X\tilde X^T - \tilde\popx\tilde\popx^T\|_F \ge \frac{1}{2}\|X_R - \popx_R\rot_R\|_F$. This proves the above theorem.
\end{proof}

\section{Clustering} \label{clusterAppendix}

To rigorously discuss the asymptotic estimation properties of \textsc{di-sim}, the next subsections examine the behavior of \textsc{di-sim} applied to a population version of the graph Laplacian $\popl$, and compare this to \textsc{di-sim} applied to the observed graph Laplacian $L$.


 \subsection{The population version of \textsc{di-sim}} \label{populationalg}
 
  This subsection shows that \textsc{di-sim} applied to $\popl$ can perfectly identify the blocks in the Stochastic co-Blockmodel.  
Recall \textsc{di-sim} applied to $L$.
\begin{enumerate}
\item Find the left singular vectors $X_L \in \R^{N_r \times k_y}$.
\item Normalize  each row of $X_L$  to have unit length. Denote the normalized rows of $X_L$ as $u_1, \dots, u_{N_r} \in \R^{k_y}$ with and $\|u_i\|_2 = 1$.
\item Run ($1+\alpha$)-approximate $k$-means on $u_1, \dots, u_{N_r}$ with $k_y$ clusters.
\item Repeat steps (a), (b), and (c) for the the right singular vectors $X_R \in \R^{N_c \times k_y}$ with $k_z$ clusters. 
\end{enumerate}

$k$-means clusters points $u_1, \dots, u_n$ in Euclidean space by optimizing the following objective function (\cite{steinhaus1956division}),
\begin{equation}\label{kmeans}
\min_{\{m_1, \dots, m_{k_y}\} \subset \R^{k_y}} \sum_i \min_g \|u_i - m_g\|_2^2.
\end{equation}
Define the \textit{centroids} as the arguments $m_1^*, \dots, m_{k_y}^*$ that optimize (\ref{kmeans}). Finding $m_1^*, \dots, m_{k_y}^*$ is NP-hard. 
\textsc{di-sim} uses a linear time algorithm, ($1+\alpha$)-approximate $k$-means (\citet{kumar2004simple}). That is, the algorithm computes $\hat m_1, \dots, \hat m_{k_y}$ such that 
\[\sum_i \min_g \|u_i - \hat m_g\|_2^2 \le (1+\alpha)\sum_i \min_g \|u_i - m^*_g\|_2^2.\]

To study \textsc{di-sim} applied to $\popl$, Lemma \ref{lemPopL} gives an explicit form as a function of the parameters of the DC-ScBM.  
Recall that  $\popa = E(A)$ and under the DC-ScBM,
\[\popa = \Theta_yY{\bf B}Z^T\Theta_z,\]
where $Y\in \{0,1\}^{N_r \times k_y}, Z \in \{0,1\}^{N_c \times k_z},$ and $B \in [0,1]^{k_y \times k_z}$.  Assume that $k_y \le k_z$, without loss of generality.  
Moreover, recall that the regularized population versions of  $O$, $P$, and $L$ are defined as
\begin{equation}
    \begin{array}{lll}
\popp_{jj} = \sum_k \popa_{kj}  \B \\ 
\popo_{ii} = \sum_k \popa_{ik}  \B \\ 
\popo_\tau = \popo+\tau I, \quad\quad \popp_\tau = \popp+\tau I \B \\
\popl  =  \popo_\tau^{-\frac{1}{2}} \popa\popp_\tau^{-\frac{1}{2}} \B
\end{array}
\end{equation}
where $\popo_\tau$ and $\popp_\tau$ are diagonal matrices.

The following proves Lemma \ref{lemPopL}.
\begin{proof}
Define $O_B \in \R^{k_y \times k_y}$ as a diagonal matrix whose $(s,s)$'th element is $[O_B]_{ss} = \sum_t B_{st}$. Similarly define $P_B \in \R^{k_z \times k_z}$ as a diagonal matrix whose $(t,t)$'th element is $[P_B]_{tt} = \sum_s B_{st}$.
A couple lines of algebra shows that $[O_B]_{ss} $ is the total expected out-degrees of row nodes from block $s$ and that $\popo_{ii} = \theta^Y_{i}[O_B]_{y_i y_i}$.  Similarly $[P_B]_{tt} $ is the total expected in-degrees of column nodes from block $t$ and that $\popp_{jj} = \theta^Z_{j}[P_B]_{z_j z_j}$.

Recall that $\popo_{ii} = \theta^Y_i[P_B]_{y_i y_i}$ and $\popp_{jj} = \theta^Z_{j}[O_B]_{z_j z_j}$.
In addition, 
\[[\Theta_{Y,\tau}]_{ii} = \theta^Y_i\frac{\popo_{ii}}{\popo_{ii} + \tau} \quad \mbox{ and } \quad [\Theta_{Z,\tau}]_{jj} = \theta^Z_j\frac{\popp_{jj}}{\popp_{jj} + \tau}.\]

The $ij$'th element of $\popl_\tau$ is
\[[\popl]_{ij} = \frac{\popa_{ij}}{\sqrt{(\popo_{ii} + \tau)(\popp_{jj} + \tau)}} = \frac{\theta^Y_i\theta^Z_jB_{y_iz_j}}{\sqrt{\popo_{ii}\popp_{jj}}}\sqrt{\frac{\popo_{ii}}{\popo_{ii}+\tau}\frac{\popp_{jj}}{\popp_{jj}+\tau}} = \frac{B_{z_iz_j}}{\sqrt{[P_B]_{y_i}[O_B]_{z_j}}}\sqrt{[\Theta_{Y,\tau}]_{ii}[\Theta_{Z,\tau}]_{jj}}.\]
Hence, 
 \[\popl = \Theta_{Y,\tau}^{\frac{1}{2}}ZB_LZ^T\Theta_{Z,\tau}^{\frac{1}{2}},\]
where $B_L$ is defined  as
\begin{equation} \label{def:bl}
B_L = O_B^{-1/2}BP_B^{-1/2}.
\end{equation}
 \end{proof}

Recall that $\popa = \Theta_Y Y {\bf B} Z^T \Theta_Z$.  Lemma \ref{lemPopL} demonstrates that $\popl$ has a similarly simple form that separates the block-related information ($B_L$) and node specific information ($\Theta_Y$ and $\Theta_Z$). 

Assume that $rank(B_L) = K, 0 < K =k_y \le k_z$.  Recall $H = (Y^T\Theta_{Y,\tau} Y)^{\frac{1}{2}}B_L(Z^T\Theta_{Z,\tau}Z)^{\frac{1}{2}}$.
Singular value decomposition of H gives 
\[H = U\Lambda V^T.\]
where $U \in \R^{k_y \times K}/V \in \R^{k_z \times K}$ is the left/right singular vector of $H$ and $\Lambda \in \R^{K \times K}$ is diagonal containing the positive singular values of $H$, $\lambda_1\ge \lambda_2 \ge... \ge \lambda_K >0$.
The proof of the next lemma shows that $H$ and $\popl$ share the same nonzero singular values.

The next lemma gives the explicit form of the left and right population singular vectors and further shows that their normalized versions are block constant.
 \begin{lemma} \label{lemsvd} (Singular value decomposition for $\popl$) Under the DC-ScBM with parameters $\{{\bf B}, Y, Z, \Theta_Y, \Theta_Z\}$, 
 Let $\popx_L \in \R^{N_r \times K}(\popx_R \in \R^{N_c \times K})$ contain the left/right singular vectors of $\popl_\tau$. Define $\popx_L^*/\popx_R^*$to be the row-normalized $\popx_L/\popx_R$. Then 
\begin{enumerate}
\item $\popx_{L} = \Theta_{Y,\tau}^{\frac{1}{2}}Y(Y^T\Theta_{Y,\tau} Y)^{-\frac{1}{2}}U$,
\item $\popx_{R} = \Theta_{Z,\tau}^{\frac{1}{2}}Z(Z^T\Theta_{Z,\tau} Z)^{-\frac{1}{2}}V$.
\item $\popx_L^* =  YU$, $Y_i \neq Y_j \Leftrightarrow Y_iU \neq Y_jU$.
\item $\popx_R^* = ZV^*$, where $V^*_j = V_j/\|V_j\|_2$.
\end{enumerate}
 \end{lemma}

 \begin{proof}
Recall that  $H = (Y^T\Theta_{Y,\tau} Y)^{\frac{1}{2}}B_L(Z^T\Theta_{Z,\tau }Z)^{\frac{1}{2}}$ and singular value decompositon of H gives $H = U\Lambda V^T$.
 
 Define $\popx_{L} = \Theta_{Y,\tau}^{\frac{1}{2}}Y(Y^T\Theta_{Y,\tau} Y)^{-\frac{1}{2}}U$, and 
$\popx_{R} = \Theta_{Z,\tau}^{\frac{1}{2}}Z(Z^T\Theta_{Z,\tau} Z)^{-\frac{1}{2}}V$.
It is easy to check that $\popx_{L}^T\popx_{L} = I$ and $\popx_{R}^T\popx_{R} = I$.

 On the other hand, 
 \begin{align*}
 \popx_{L} \Lambda \popx_{R}^T = \Theta_{Y,\tau}^{\frac{1}{2}}YB_LZ^T\Theta_{Z,\tau}^{\frac{1}{2}} = \popl.
 \end{align*}
 Hence, $\lambda_s, s = 1,...,r$ are $\popl_\tau$'s nonzero singular values and $\popx_{L}/\popx_{R}$ contains $\popl_\tau$'s left/right singular vectors corresponding to its nonzero singular values.
 
Let $\popx_{L}^i$ denote the $i$'th row of $\popx_{L}$. For part (c), notice that  
 \[\|\popx_{L}^i\|_2 = (\frac{[\Theta_{Y,\tau}]_{ii}}{[Y^T\Theta_{Y,\tau} Y]_{y_iy_i}})^{\frac{1}{2}}.\]
 So,
\[ [\popx_{L}^*]^i = \frac{\popx_{L}^i}{\|\popx_{L}^i\|_2}  = Y_iU. \]
 Therefore, $\popx_{L}^* = YU$. 
 For (d), notice that 
 \[\|\popx_{R}^j\|_2 = (\frac{[\Theta_{Z,\tau}]_{jj}\|V_{Z_j}\|^2}{[Z^T\Theta_{Z,\tau} Z]_{z_jz_j}})^{\frac{1}{2}}. \]
  Hence, 
  \[ [\popx_{R}^*]^j = \frac{\popx_{R}^j}{\|\popx_{R}^j\|_2}  = Z_jV^*. \]
  \end{proof}

\subsection{Comparing the population and observed clusters} \label{comparing}
The first part of the section proves the bound of misclustering rate for row nodes.
\subsubsection{Clustering for $Y$}
\begin{proof}
Recall that the set of misclustered row nodes is defined as:
\begin{align*}
\mathscr{M}_y = \left\{i : \|c_i^L- y_i \mu^y\rot_L\|_2 > \|c_i^L - y_j \mu^y\rot_L\|_2 \ \mbox{ for any } \ y_j \ne y_i\right\}.
\end{align*}
Let $\mathcal{C}_i$ denote  $y_i \mu^y$.
Note that Lemma \ref{lemsvd} implies that the population centroid corresponding to the $i$'th row of $\popx^*_L$ is
 \[ \mathcal{C}_i = y_i \mu^y= y_iU.\]  
 Since all population centroids are of unit length and are orthogonal to each other, a simple calculation gives a sufficient condition for one observed centroid to be closest to the population centroid:
\begin{align*}
\|c^L_i \rot_L^T - \mathcal{C}^L_i\|_2 < 1/\sqrt{2} \Rightarrow \|c_i^L\rot_L^T - \mathcal{C}^L_i\|_2 < \|c^L_i\rot_L^T - \mathcal{C}^L_j\|_2, \quad \forall j \neq i.
\end{align*}
Define the following set of nodes that do not satisfy the sufficient condition,
\begin{align*}
\mathscr{B}_y = \{i: \|c^L_i\rot_L^T - \mathcal{C}^L_i\|_2 \ge 1/\sqrt{2}\}.
\end{align*}
The mis-clustered nodes $\mathscr{M}_y \subset \mathscr{B}_y$. 

Define $C_L \in \R^{N_r \times K}$, where the $i$'th row of $C_L$ is $c^L_i$, the observed centroid of node $i$ from the $(1+\alpha)$-approximate k-means. 
Define $M_L \in \R^{N_r \times K}$ to be the global solution of k-means.
By definition, 
\[  \|X^*_L - C_L\|_F \le(1+\alpha)\|X^*_L - M_L\|_F \le (1+\alpha)\|X^*_L - \popx^*_L\rot_L\|_F.\]
Further, by the triangle inequality,
\[\|C_L - YU\rot_L\|_F = \|C_L - \popx^*_L\rot_L\|_F \le \|X^*_L - C_L\|_F + \|X^*_L - \popx^*_L\rot_L\|_F \le (2+\alpha)  \|X^*_L - \popx^*_L\rot_L\|_F.\]
Thus,
\begin{align*}
\frac{|\mathscr{M}_y|}{N_r} &\le \frac{|\mathscr{B}_y|}{N_r} = \frac{1}{N_r}\sum_{i \in \mathscr{B}_y} 1 \\
&\le \frac{2}{N_r} \sum_{i \in \mathscr{B}_y}\|c^L_i\rot_L^T - \mathcal{C}^L_i\|_2^2\\
&= \frac{2}{N_r}\|C_L - YU\rot_L\|_F^2\\
&\le \frac{2(2+\alpha)^2}{N_r}\|X^*_L - \popx^*_L\rot_L\|_F^2\\
&\le \frac{8(2+\alpha)^2}{N_rm_y^2}\|X_L - \popx_L\rot_L\|_F^2.
\end{align*}

The last inequality is due to the following fact.
\begin{lemma}\label{simple}
For two non-zero vectors $v_1, v_2$ of the same dimension, we have 
\[\|\frac{v_1}{\|v_1\|_2} - \frac{v_2}{\|v_2\|_2}\|_2 \le 2\frac{\|v_1-v_2\|_2}{max(\|v_1\|_2, \|v_2\|_2)}.\]
\end{lemma}

By Theorem \ref{dk}, we have, with probability at least $1 - \epsilon$,
\[\frac{|\mathscr{M}_y|}{N_r} \le c_0(\alpha)\frac{K\ln(4(N_r+N_c)/\epsilon)}{N_r\lambda_K^2m_y^2(\delta + \tau)}.\]
\end{proof}

The second part proves the bound of the misclustering rate for column nodes.
\subsubsection{Clustering for $Z$}
Because $k_y \le k_z$, it is slightly more challenging to bound $\mathscr{M}_z$.

\begin{proof}
Recall that  $H = (Y^T\Theta_{Y,\tau} Y)^{\frac{1}{2}}B_L(Z^T\Theta_{Z,\tau}Z)^{\frac{1}{2}}$ and $H = U\Lambda V^T$.
Left multiply by $\Lambda^{-1}U^T$, we have
\[V = H^TU\Lambda^{-1}.\]
Hence 
\[\|V_i - V_j\|_2 \ge \frac{1}{\lambda_1} \|H_{\cdot i}U - H_{\cdot j}U\|_2 \ge \|H_{\cdot i} - H_{\cdot j}\|_2.\]
The second inequality is due to the facts that $\lambda_1\le  1$ and $U$ is an orthogonal matrix.
Recall that 
\[\gamma_z = \min_{i \neq j}\|H_{\cdot i} - H_{\cdot j}\|_2 +(1- \kappa),\]
where $\kappa = \max_{i,j} \|V_i\|_2/\|V_j\|_2$.
We have that, $\forall i\neq j$,
\[\|V^*_i - V^*_j\|_2 \ge \gamma_z.\]
This is because
\begin{align*}
\|V^*_i - V^*_j\|_2 &= \|\frac{V_i - V_j}{\|V_j\|_2} + V_i(\frac{1}{\|V_i\|_2} - \frac{1}{\|V_j\|_2})\|_2\\
&\ge \|V_i - V_j\|_2 + 1 - \frac{\|V_i\|_2}{\|V_j\|_2}\\
&\ge \|H_{\cdot i} - H_{\cdot j}\|_2 +(1- \kappa)\\
&\ge \gamma_z.
\end{align*}

Recall that the set of misclustered row nodes is defined as:
\begin{align*}
\mathscr{M}_z = \left\{i : \|c_i^R- z_i \mu^z\rot_R\|_2 > \|c_i^R - z_j \mu^z\rot_R\|_2 \ \mbox{ for any } \ z_j \ne z_i\right\}.
\end{align*}
Let $\mathcal{C}^R_i$ denote $z_i \mu^z$. Note that Lemma \ref{lemsvd} implies that the population centroid corresponding to the $i$'th row of $\popx^*_R$ is
\[ \mathcal{C}^R_i = z_i \mu^z = Z_iV^*.\]  

Define the following set of column nodes,
\begin{align*}
\mathscr{B}_z = \{i: \|c^R_i\rot_R^T - \mathcal{C}^R_i\|_2 \ge \gamma_z/2\}.
\end{align*}
It is straightforward to show that $\mathscr{M}_z \in \mathscr{B}_z$. 

Define $C_R \in \R^{N_c \times K}$, where the $i$'th row of M is $c^R_i$, the observed centroid of column node $i$ from $(1+\alpha)$-approximate k-means.
Define $M_R \in \R^{N_r \times K}$ to be the global solution of k-means.
 By definition, we have 
\[\|X^*_R - C_R\|_F \le (1+\alpha)\|X^*_R - M_R\|_F\le(1+\alpha)\|X^*_R - \popx^*_R\rot_R\|_F.\]
Further, by the triangle inequality,
\[\|C_R - ZV^*\rot_R\|_F = \|C_R - \popx^*_R\rot_R\|_F \le \|X^*_R - C_R\|_F + \|X^*_R - \popx^*_R\rot_R\|_F \le (2+\alpha)  \|X^*_R - \popx^*_R\rot_R\|_F.\]
Putting all of these pieces together,
\begin{align*}
\frac{|\mathscr{M}_z|}{N_c} &\le \frac{|\mathscr{B}_z|}{N_c} = \frac{1}{N_c}\sum_{i \in \mathscr{B}_z} 1 \\
&\le \frac{4}{N_c\gamma_z^2} \sum_{i \in \mathscr{B}_y}\|c^R_i\rot_L^R - \mathcal{C}^R_i\|_2^2\\
&= \frac{4}{N_c\gamma_z^2}\|C_R - ZV^*\rot_R\|_F^2\\
&\le \frac{4(2+\alpha)^2}{N_c\gamma_z^2}\|X^*_R- \popx^*_R\rot_R\|_F^2\\
&\le \frac{16(2+\alpha)^2}{N_c\gamma_z^2m_z^2}\|X_R- \popx_R\rot_R\|_F^2.
\end{align*}

By Theorem \ref{dk}, we have with probability at least $1 - \epsilon$,
\[\frac{|\mathscr{M}_z|}{N_c} \le c_1(\alpha)\frac{K\ln(4(N_r+N_c))/\epsilon)}{N_r\lambda_K^2m_z^2\gamma_z^2(\delta + \tau)}.\]
\end{proof}

The following is a proof of Corollary \ref{fourcor}.
\begin{proof}
Under the four parameter ScBM, presume that $\theta_i=1/s$ for all $i$.  From the proof of Lemma \ref{lemsvd},  $\popl$ has the same singular values as  
\[H = (Y^T\Theta_{Y,\tau=0} Y)^{\frac{1}{2}}B_L(Z^T\Theta_{Z,\tau=0}Z)^{\frac{1}{2}} = B_L =  O_B^{-\frac{1}{2}}BP_B^{-\frac{1}{2}} = \frac{1}{s^2(Kr + p)} (s^2pI_K + s^2r\textbf{1}_K\textbf{1}_K^T).\]
By inspection, the constant vector is an eigenvector of this matrix.  It has eigenvalue
\[\lambda_1 =  \frac{p + Kr}{Kr + p} = 1.\]
Any vector orthogonal to a constant vector is also an eigenvector.  These eigenvectors have eigenvalue 
\[\lambda_k = \frac{p}{Kr + p} = \frac{1}{K(r/p) + 1}.\]
The result follows from using $m_y^2 = K/n$ (see discussion after Theorem \ref{clusterTheorem}) and $\delta \propto N$.
\end{proof}

\end{document}